\DeclareMathOperator*{\argmin}{arg\,min}
\DeclareMathOperator*{\infimum}{inf}
\DeclareMathOperator*{\supremum}{sup}
\theoremstyle{plain}
\newtheorem{theorem}{Theorem}[section]
\newtheorem{proposition}[theorem]{Proposition}
\theoremstyle{definition}
\newtheorem{definition}[theorem]{Definition}
\theoremstyle{remark}
\renewcommand*{\theequation}{\theequationprefix}
\newcommand*{\theequationprefix}{}
\newcommand*{\equationprefix}[2][0]{%
  \renewcommand*{\theequationprefix}{#2}
  \setcounter{equation}{#1}
}
\newcommand{\algcomment}[1]{\hfill// {\small\texttt{#1}}}
\title{A Lagrangian Duality Approach to Active Learning}
\author{ Juan Elenter \thanks{Corresponding author.}   \\
University of Pennsylvania \\
\texttt{elenter@seas.upenn.edu} 
\And
Navid NaderiAlizadeh\\
University of Pennsylvania\\
\texttt{nnaderi@seas.upenn.edu}
	\And
Alejandro Ribeiro\\
University of Pennsylvania\\
\texttt{aribeiro@seas.upenn.edu}
}
\begin{document}

\maketitle

\begin{abstract}
We consider the pool-based active learning problem, where only a subset of the training data is labeled, and the goal is to query a batch of unlabeled samples to be labeled so as to maximally improve model performance. We formulate the problem using constrained learning, where a set of constraints bounds the performance of the model on labeled samples. Considering a primal-dual approach, we optimize the primal variables, corresponding to the model parameters, as well as the dual variables, corresponding to the constraints. As each dual variable indicates how significantly the perturbation of the respective constraint affects the optimal value of the objective function, we use it as a proxy of the informativeness of the corresponding training sample. Our approach, which we refer to as Active Learning via Lagrangian dualitY, or ALLY, leverages this fact to select a diverse set of unlabeled samples with the highest estimated dual variables as our query set. We demonstrate the benefits of our approach in a variety of classification and regression tasks and discuss its limitations depending on the capacity of the model used and the degree of redundancy in the dataset. We also examine the impact of the distribution shift induced by active sampling and show that ALLY can be used in a generative mode to create novel, maximally-informative samples.
\end{abstract}


\section{Introduction}
\label{intro}

One of the key drivers of the recent progress in machine learning is the availability of massive, high-quality datasets, which enables training models comprising millions, or even billions, of parameters~\citep{GPT3, lepikhin2021gshard, dosovitskiy2021an}. Nevertheless, in some areas, such as healthcare, obtaining \emph{labeled} training data is challenging and/or expensive~\citep{al_cancerdet, al_medical, med_im_seg}. This has given rise to a class of approaches, collectively referred to as \emph{active learning}, whose goal is to minimize the labeling effort for training machine learning models.

Active learning methods aim to improve data efficiency by querying the labels of samples presumed informative, in a feedback-driven fashion. In recent years, the pool-based active learning setting, in which queries are drawn from a large, static pool of unlabeled samples has drawn significant attention~\citep{al_survey, CORESET, bal_at_scale}. This is due to the abundance of such unlabeled pools and the compatibility of the pool-based setting with the training of deep neural networks. 

Most active learners rely on defining a notion of \emph{informativeness} of a given sample, such as model uncertainty \citep{poolbased, sapiroal}, expected model change \citep{emc, maxemc} or expected error reduction \citep{eer, BADGE}. However, in the batch setting, where multiple samples are queried simultaneously, not contemplating the information overlap between the samples can lead to sub-optimal queries. Consequently, batch \emph{diversity} needs to be taken into account, often at the expense of individual sample informativeness \citep{diversity}.



In this paper, we demonstrate how a \emph{constrained learning} formulation of the problem enables the use of \emph{Lagrangian duality} for detecting informative samples. In particular, we bound the loss incurred by each sample, and use the dual variables associated to these constraints as a measure of informativeness. We show that dual variables are directly related to the variations of the average optimal loss over the entire data distribution, which motivates our approach.

Through an iterative primal-dual strategy, we optimize the model parameters as well as the dual variables. We then leverage the learned embedding space~\citep{bengio2013representation, oord2018representation} to train a \emph{dual regression head} that estimates the dual variable associated to each unlabeled sample. Our proposed active learning approach, which we refer to as Active Learning via Lagrangian dualitY, or ALLY can then be used to select a diverse and informative set of samples. We also argue that, under certain conditions, the strategy put forward is not severely impacted by the distribution shift induced by active sampling. 



We evaluate the performance of ALLY on a suite of classification and regression tasks, and show that it performs similar to or better than state-of-the-art batch active learning methods. We further demonstrate how the trained backbone, alongside the dual regression head, enable the generation of novel samples that can be optimized to be maximally informative, shedding light on the interpretability of the proposed active learning framework. 


\section{Related Work}
\subsection{Active Learning}
\label{poolbasedal}

The literature on active learning is voluminous and a myriad of strategies for the pool-based setting have been proposed \citep{al_survey}. In what follows, we describe some of the approaches most connected to our work.

A simple way of measuring model uncertainty is by computing the entropy of the predicted class distribution. Designed for the sequential case, Entropy Sampling \citep{poolbased} selects the unlabeled sample with highest associated output entropy. Among the relevant methods is BADGE \citep{BADGE}, which employs a lower bound on the norm of the gradients in the final layer of the network as a measure of informativeness. BADGE balances diversity  and informativeness by using the $k$-MEANS++ seeding algorithm to select a batch with large Gram determinant in the gradient space. BAIT \citep{BAIT} builds on traditional, Information Matrix based methods to efficiently select a batch that optimizes a bound on the MLE error in a two stage manner. Other methods that propose notions of informativeness are BALD \citep{BALD}, which uses the mutual information between predictions and model parameters as an uncertainty measure, and Learning Loss \citep{learninglossal}, which trains a loss prediction module and queries the samples that hypothetically generate high errors (and, thus, large model updates). VAAL \citep{VAAL} and WAAL \citep{WAAL} are methodologically similar to \citep{learninglossal} in the sense that they all use a multipartite system consisting of a feature encoder, a prediction head and an auxiliary estimator. This is also the case for the strategy presented in this paper.

Some diversity-promoting approaches are compatible with many informativeness measures. A popular approach is to cluster the samples of the unlabeled set and then select informative points from each cluster \citep{div_kmeans1, div_kmeans2, div_kmeans3}. In \citep{div_approx}, Monte Carlo sampling is used to simulate sequences of length $b$ of the sequential algorithm, and then a \textit{best-matching} combination of the sequences is used to build a batch. A simpler approach is to select the $b$ most informative points after a stochastic perturbation of the informativeness scores \citep{stoch_db}.

Some methods do not enforce informativeness explicitly, but rather query a set of data points that is maximally representative of the entire unlabeled set. Coreset \citep{CORESET}, for instance, formulates pool-based active learning as a core-set selection problem, and aims to identify a set of points that geometrically covers the entire representation space. To do this, Coreset selects the batch that when added to the labeled set, minimizes the maximum distance between labeled and unlabeled examples. Coreset is compatible with deep neural networks and can be used in both regression and classification settings. Similarly, DAL \citep{disc_active_learning} emphasizes representativeness by framing active learning as a binary classification task and selecting queries that maximize the similarity between the labeled and unlabeled set.

\subsection{Constrained Learning}

The need to tailor the behavior of machine learning systems has led to the development of a constrained learning theory. A common approach is to use regularization, that is, to modify the learning objective so as to promote certain requirements. However, choosing the level of regularization through a dimensionless weight can be more challenging than setting a constraint level.
Furthermore, recent works \citep{LuizPAC, LuizNonConvex, lagrangianduality2} show that, from a PAC (Probably Approximately Correct) perspective, learning under requirements is as hard as classical learning and that it can be done in practice through primal-dual learners. This has led to numerous applications across several areas of ML such as federated learning \citep{pochi1}, fairness \citep{LuizPAC}, stability of neural networks \citep{pochi2, raghu}, adversarial robustness \citep{robey} and data augmentation \citep{chunis}. 

\section{Problem Formulation}

\subsection{Batch Active Learning}

Let $\mathfrak{D}$ denote a probability distribution over data pairs $(\boldsymbol{x}, y)$, where $\boldsymbol{x} \in \mathcal{X} \subseteq \mathbb{R}^D$  represents a feature vector (e.g., the pixels of an image) and $y \in \mathcal{Y} \subseteq \mathbb{R}$ represents a label or measurement. In classification tasks, $\mathcal{Y}$ is a subset of $\mathbb{N}$, whereas in regression, $\mathcal{Y} = \mathbb{R}$.

Initially, a set $\mathcal{L} = \{(\boldsymbol{x}_i, y_i)\}_{i \in \mathcal{N}_{\mathcal{L}}}$ of data pairs, or labeled samples, is available, coming from a probability distribution $\mathfrak{D}$. This set is used to learn a predictor $f(\mathbf{x}; \mathcal{L}): \mathcal{X} \to \mathcal{Y}$ from a hypothesis class $\mathcal{F}$. Note that dependence on the set used to learn $f$ is made explicit. Then, a batch $\mathcal{B}$ of samples, or \emph{queries}, is selected from a pool of unlabeled samples $\mathcal{U} = \{\boldsymbol{x}_i \}_{i \in \mathcal{N}_\mathcal{U}} $ and sent to an oracle for labeling. The goal is to the select the batch that minimizes the expected loss over the natural data distribution. More precisely, we formulate the Batch Active Learning (BAL) problem as
\begin{align}
\label{eq:batch_activelearning}
\tag{BAL}
\mathcal{B}^{\star} = \argmin _{\mathcal{B} \subseteq \mathcal{U} \: : \: | \mathcal{B} | \leq b} \; \min _{f \in \mathcal{F}} \mathbb{E}_{(\mathbf{x}, y) \sim \mathfrak{D}}\left[\ell\left(f(\mathbf{x; \mathcal{L}} \cup \mathcal{B}), y
\right)\right],
\end{align}
where $b$, referred to as the \emph{budget}, represents the maximum cardinality of $\mathcal{B}$ and $\ell:\mathcal{Y} \times \mathcal{Y} \to \mathbb{R}$ is a loss function (e.g., cross-entropy loss or mean-squared error). 

This process is typically repeated multiple times. At each iteration, two main steps are performed: (i) selecting $\mathcal{B}$ and updating the sets: $\mathcal{L}^{(t)} = \mathcal{L}^{(t-1)} \cup \mathcal{B}$ and $\mathcal{U}^{(t)} = \mathcal{U}^{(t-1)} \setminus \mathcal{B}$, and (ii) obtaining the predictor $f$ with the aggregate set of labeled samples $\mathcal{L}^{(t)}$. Steps (i) and (ii) correspond to the outer and inner minimization problems in~\eqref{eq:batch_activelearning}, respectively. In what follows, we focus on a single iteration, and thus obviate the dependence on the iteration $t$ and the set used to ease the notation.

\subsection{Constrained Statistical Learning}\label{sec:CSL}

Most active learning methods in the literature~\citep{CORESET, BADGE, BALD, al_medical, al_survey} formulate step (ii) above as an \emph{unconstrained} Statistical Risk Minimization (SRM) problem \citep{VapnikSRM},
\begin{align}
\label{eq:SRM}
\tag{SRM}
\min_{f \in \mathcal{F}} \: \mathbb{E}_{(\boldsymbol{x}, y) \sim \mathfrak{D}}\left[\ell \left(f(\boldsymbol{x}), y\right)\right].
\end{align}
Our approach, alternatively, uses a \emph{Constrained} Statistical Learning (CSL) formulation,
\equationprefix{CSL} 
\begin{subequations}\label{eq:cl_formulation}
\renewcommand{\theequation}{CSL-\alph{equation}}
\begin{alignat}{2}
    P^{\star} = &\min_{f \in \mathcal{F}}  &\quad& \mathbb{E}_{(\boldsymbol{x}, y) \sim \mathfrak{D}}\left[\ell \left(f(\boldsymbol{x}), y\right)\right]\label{eq:CSL_obj} \\
    &~~\text{s.t.} && \ell' \left(f(\boldsymbol{x}), y\right) \leq \epsilon(\boldsymbol{x}), \quad \mathfrak{D}_{\boldsymbol{x}} \text {--a.e. }\label{eq:CSL_const}
\end{alignat}
\end{subequations}
\renewcommand{\theequation}{\arabic{equation}}%
\equationprefix{}
where $\ell':\mathcal{Y} \times \mathcal{Y} \to \mathbb{R}$ is a secondary loss function, $\epsilon: \mathcal{X} \to \mathbb{R}$ is a mapping from each data point to a corresponding constraint upper bound, and $\mathfrak{D}_{\boldsymbol{x}}$ denotes the marginal distribution over $\mathcal{X}$. Note that the objective function in~\eqref{eq:CSL_obj} is the same as in~\eqref{eq:SRM}, but the 
secondary loss is required to be bounded $\mathfrak{D}_{\boldsymbol{x}}-$almost everywhere.

Letting $\lambda:\mathcal{X} \to \mathbb{R}^+$ denote the dual variable function, the Lagrangian associated to~\eqref{eq:cl_formulation} can be written as
\begin{align*}
L(f, \lambda) &= \mathbb{E}_{(\boldsymbol{x}, y) \sim \mathfrak{D}} \: [\ell (f(\boldsymbol{x}), y) ] \quad+
\int_{\mathcal{X,Y}} \lambda(\boldsymbol{x})(\ell' (f(\boldsymbol{x}), y)-\epsilon(\boldsymbol{x}))p(\boldsymbol{x}, y) d \boldsymbol{x} d y \\
& =  \mathbb{E}_{(\boldsymbol{x}, y) \sim \mathfrak{D}} \: \Big[ \ell (f(\boldsymbol{x}), y)  + \lambda(\boldsymbol{x})( \ell'(f(\boldsymbol{x}), y)  - \epsilon(\boldsymbol{x}) )\Big],
\end{align*}
where it is implicitly assumed that the conditional distribution $p(y|\boldsymbol{x})$ is a Dirac delta distribution, i.e., $y$ is a deterministic function of $\boldsymbol{x}$. This leads to the dual problem,
\begin{align}
\label{eq:D-CSL}
\tag{D-CSL}
D^{\star}=\max _{\lambda\in\Lambda} \; \min _{f \in \mathcal{F}} L\left(f, \lambda(\boldsymbol{x}) \right),
\end{align}
where 
$\Lambda\coloneqq\{\lambda  \, | \, \lambda(\boldsymbol{x}) \geq 0, \:\: \mathfrak{D}_{\boldsymbol{x}} \text{--a.e.} \}$. There are three main motivations for this infinite programming formulation:
\begin{enumerate}
    \item \textbf{Access to variations of $P^{\star}$}: As we show in Theorem~\ref{theo:obj_derivative}, this formulation gives us access to $ \partial P^{\star} ( \epsilon(\boldsymbol{x}))$, enabling the use of \emph{dual variables} as an indicator of the \emph{informativeness} of the training samples. 
    \item \textbf{Considering both average and worst-case losses}: The most informative samples often lie in the tails of the distribution $\mathfrak{D}$. Those samples appear less frequently in the dataset and thus, models obtained by solving \eqref{eq:SRM} can achieve low \emph{average} errors without learning to classify/regress them correctly. Our formulation bridges the gap between the \eqref{eq:SRM} and a worst-case Feasibility Statistical Learning (FSL) formulation,
    \equationprefix{FSL} 
    \begin{subequations}\label{FSL}
    \renewcommand{\theequation}{FSL-\alph{equation}}
    \begin{alignat}{2}
        P^{\star} = &\min_{f \in \mathcal{F}}  &\quad& 0 \\
        &~~\text{s.t.} && \ell \left(f(\boldsymbol{x}), y\right) \leq     \epsilon(\boldsymbol{x}), \quad \mathfrak{D}_{\boldsymbol{x}}     \text {--a.e. }
    \end{alignat}
    \end{subequations}
    \renewcommand{\theequation}{\arabic{equation}}%
    \equationprefix{}
    which considers the worst-case loss.
    \item \textbf{Adaptive regularization}: For a fixed dual variable $\lambda$, the Lagrangian is a regularized objective, where $\ell'$ acts as a regularizing functional. Thus, the max-min formulation in \ref{eq:D-CSL} can be viewed as a regularized minimization, where the regularization weight is updated during the training procedure according to the degree of constraint satisfaction or violation. 
\end{enumerate}

The dual problem can be interpreted as finding the tightest lower bound on $P^{\star}$. In the general case, $D^{\star} \leq P^{\star}$, which is known as weak duality. Nevertheless, under certain conditions, $D^{\star}$ attains $P^{\star}$ (strong duality) and we can derive a relation between the solution of \eqref{eq:D-CSL} and the sensitivity of $P^{\star}$ with respect to $\epsilon(\boldsymbol{x})$. See Appendix~\ref{appx:duality} for more details.

In the following, we define the \emph{Fréchet subdifferential} of a convex function, which allows us to justify the use of dual variables as a measure of \emph{informativeness} of a sample.

\begin{definition}
Let $U, V$ be Banach spaces. The Fréchet subdifferential of a functional $P: U \to V $ at $u \in U$ is defined as:
$$
\partial P(u) = \{ z \in U^* \, : \, P(v) - P(u) \geq \langle z \, , \, v-u \rangle \: \: \text{for all $v \in U$}\},
$$

where $U^*$ denotes the topological dual space of $U$, and $\langle z \, , \, v-u \rangle = \mathbb{E}_{\mathfrak{D}}\left[ z(\boldsymbol{x}) (v(\boldsymbol{x}) - u(\boldsymbol{x})) \right] $.
\end{definition}

Having the above definition, we state following theorem, which characterizes the variations of $P^{\star}$ (the optimum value of \ref{eq:cl_formulation}) as a function of the constraint tightness $\epsilon(\boldsymbol{x})$.

\begin{theorem}\label{theo:obj_derivative}
If the problem~\eqref{eq:cl_formulation} is strongly dual, then for any $\boldsymbol{x} \in \mathcal{X}$, we have
\begin{equation*}
  - \lambda^{\star}(\boldsymbol{x}) \: \in \: \partial P^{\star}(\epsilon(\boldsymbol{x})), 
\end{equation*}
where $\partial P^{\star}(\epsilon(\boldsymbol{x}))$ denotes the Fréchet subdifferential of $P^{\star}$ with respect to $\epsilon(\boldsymbol{x})$, and $\lambda^{\star}(\boldsymbol{x})$ is the optimal dual variable associated to the constraint on $\boldsymbol{x}$. 
\end{theorem}

\begin{proof}
See Appendix~\ref{appx:theo_proof}.
\end{proof}

For any $\boldsymbol{x_0} \in \mathcal{X}$, let $\delta_{\boldsymbol{x_0}}(\boldsymbol{x}) $ be a bump function of radius $r > 0$, centered at $\boldsymbol{x_0}$ (i.e., a continuous, radially-increasing function with support in $\|\boldsymbol{x}-\boldsymbol{x_0}\| \leq r$). Theorem \ref{theo:obj_derivative} implies that
\begin{align*}
P^{\star}(\epsilon(\boldsymbol{x})+ \delta_{\boldsymbol{x_0}}(\boldsymbol{x})) - P^{\star}(\epsilon(\boldsymbol{x})) \geq \langle \, -\lambda^{\star}(\boldsymbol{x_0}) , \,\delta_{\boldsymbol{x_0}}(\boldsymbol{x}) \, \rangle.
\end{align*}

The problem~\eqref{eq:cl_formulation} typically includes an infinite number of constraints. Theorem~\ref{theo:obj_derivative} implies that the constraint whose perturbation has the most \emph{potential impact} on the optimal value of~\eqref{eq:cl_formulation} is the constraint with the highest associated optimal dual variable. 
For instance, infinitesimally tightening the constraint in a neighbourhood $\boldsymbol{x_0}$ would restrict the feasible set, causing an increase of the optimal value of \eqref{eq:cl_formulation} at a rate larger than $\lambda^{\star}(\boldsymbol{x_0})$. In that sense, the magnitude of the dual variables can be used as a measure of informativeness. Similarly to non-support vectors in SVMs \citep{svmvapnik}, samples associated to inactive constraints (i.e., \{$\boldsymbol{x_0} : \lambda^{\star}(\boldsymbol{x_0}) = 0\}$), are considered uninformative. 

\subsection{On the Statistical Bias Induced by Active Sampling}
\label{sec:biasAL}
In pool-based active learning, the resulting training set may not be representative of the natural data distribution $\mathfrak{D}$ \citep{biasinAL1, biasinAL2, biasinAL3}. This is due to the fact that queried samples are not randomly selected and often lie in the tails of $\mathfrak{D}$. Therefore, when performing several active learning iterations, we undertake a biased version of~\eqref{eq:batch_activelearning}:
\begin{align}
\label{biasedBAL}
\tag{bBAL}
\mathcal{B}^{\star} = \argmin _{\mathcal{B} \subseteq \mathcal{U}^{(t)} \: : \: | \mathcal{B} | \leq b} \; \min _{f \in \mathcal{F}} \mathbb{E}_{(\mathbf{x}, y) \sim \mathfrak{A}^{(t)}}\left[\ell\left(f(\mathbf{x}; \mathcal{L}^{(t)} \cup \mathcal{B}), y
\right)\right],
\end{align}
where $\mathfrak{A}^{(t)}$ represents the biased distribution underlying the actively sampled set $\mathcal{L}^{(t)}$. Even though, at each iteration, the queried samples have some desired property (e.g., large impact on the expected loss), the learned predictor $f(\mathbf{x}; \mathcal{L}^{(t)} \cup \mathcal{B})$ is usually sub-optimal for the natural data distribution.

One way of undertaking the inner minimization in \eqref{biasedBAL} is with the worst-case/feasibility formulation,
\equationprefix{bFSL} 
\begin{subequations}\label{bFSL}
\renewcommand{\theequation}{bFSL-\alph{equation}}
\begin{alignat}{2}
    P^{\star} = &\min_{f \in \mathcal{F}}  &\quad& 0 \\
    &~~\text{s.t.} && \ell \left(f(\boldsymbol{x}), y\right) \leq \epsilon(\boldsymbol{x}), \quad \mathfrak{A}^{t}_{\boldsymbol{x}} \text {--a.e. }
\end{alignat}
\end{subequations}
\renewcommand{\theequation}{\arabic{equation}}%
\equationprefix{}
This formulation is closely related to the constrained formulation in \eqref{eq:cl_formulation}. In Appendix \ref{app:biasAL}, we relate the dual problem of (bFSL) with \eqref{eq:D-CSL} to argue that the impact of the inconsistency between distributions on our method is small. The key observation is that, if the marginal distributions $\mathfrak{D}_{\mathbf{x}}$ and $\mathfrak{A}^{(t)}_{\mathbf{x}}$ have the same support, then the respective feasibility formulations are equivalent, regardless of the probability distribution. 



\section{Proposed Approach}
\label{proposed_app}

In light of the results mentioned in Section~\ref{sec:CSL} on the usefulness of dual variables in constrained statistical learning as a measure of sample informativeness, we present our proposed method, ALLY, in this section. We start by introducing a primal-dual procedure to empirically solve the constrained learning problem, and we will then proceed to describe our active learning algorithm in detail.

\subsection{Constrained Empirical Risk Minimization}\label{empirical_csl}

The formulation in~\eqref{eq:D-CSL} poses two challenges: (i) the distribution $\mathfrak{D}$ is usually unknown and (ii) it is an infinite-dimensional problem since it optimizes over the functional spaces $\mathcal{F}$ and $\Lambda$. We handle the former by replacing expectations by their sample means over a set of labeled samples $\mathcal{L} = \{(\boldsymbol{x}_i, y_i)\}_{i \in \mathcal{N}_{\mathcal{L}}}$, as described in the classical Empirical Risk Minimization (ERM) theory \citep{VapnikSRM, understandingML}. In order to resolve the latter, we introduce a \emph{parameterization} of the hypothesis class $\mathcal{F}$ as $\mathcal{P} = \{f_{\boldsymbol{\theta}} \, | \, \boldsymbol{\theta} \in \Theta \}$, while we create a separate dual variable $\lambda_i$ for each training sample $\boldsymbol{x}_i$. These modifications lead to the Constrained Empirical Risk Minimization (CERM) problem,
\equationprefix{CERM} 
\begin{subequations}\label{eq:cerm_formulation}
\renewcommand{\theequation}{CERM-\alph{equation}}
\begin{alignat}{2}
    \hat{P^{\star}} = &\min_{\boldsymbol{\theta} \in \Theta}  &\quad& \frac{1}{|\mathcal{N}_{\mathcal{L}}|}\sum_{i \in \mathcal{N}_{\mathcal{L}}} \ell \left(f_{\boldsymbol{\theta}}(\boldsymbol{x_i}), y_i \right) \\
    &~~\text{s.t.} && \ell' \left(f_{\boldsymbol{\theta}}(\boldsymbol{x_i}), y_i\right) \leq \epsilon_i, \: \forall i \in \mathcal{N}_{\mathcal{L}}.
\end{alignat}
\end{subequations}
\equationprefix{}%
This, in turn, results in the corresponding empirical dual problem,
\begin{align}
\hat{D^{\star}}=\max_{\boldsymbol{\lambda} \geq \mathbf{0}} \; \min _{\boldsymbol{\theta} \in \Theta} \hat{L}(\boldsymbol{\theta}, \boldsymbol{\lambda}), \tag{D-CERM}\label{eq:D-CERM}
\end{align}
where $\boldsymbol{\lambda}=\{\lambda_i\}_{i\in\mathcal{N}_{\mathcal{L}}}$, $\boldsymbol{\lambda} \geq \mathbf{0}$ represents element-wise non-negativity, $\epsilon_i$ denotes the constraint upper bound associated to the $i$\textsuperscript{th} point-wise constraint, and the \emph{empirical} Lagrangian, $\hat{L}(\boldsymbol{\theta}, \boldsymbol{\lambda})$, is defined as
\begin{align*}
\hat{L}(\boldsymbol{\theta}, \boldsymbol{\lambda}) = \frac{1}{|\mathcal{N}_{\mathcal{L}}|} \, \sum_{i \in \mathcal{N}_{\mathcal{L}}} \: &\Big[\ell (f_{\boldsymbol{\theta}}(\boldsymbol{x_i}), y_i) + \lambda_i \, [\ell' (f_{\boldsymbol{\theta}}(\boldsymbol{x_i}), y_i)-\epsilon_i)]\Big].
\end{align*}

The max-min problem~\eqref{eq:D-CERM} can be undertaken by alternating the minimization with respect to $\boldsymbol{\theta}$ and the maximization with respect to $\boldsymbol{\lambda}$  \citep{arrowhurwitz, LuizPAC, lagrangianduality2}, which leads to the primal-dual constrained learning procedure in Algorithm \ref{alg:pd}. Notice that $\min _{\boldsymbol{\theta} \in \Theta} \hat{L}(\boldsymbol{\theta}, \boldsymbol{\lambda})$ is the minimum of a family of affine functions on $\boldsymbol{\lambda}$, and thus is concave. Consequently, the outer problem corresponds to the maximization of a concave function and can be solved via gradient ascent. The inner minimization, however, is generally non-convex, but there is empirical evidence that deep neural networks can attain \textit{good} local minima when trained with stochastic gradient descent \citep{rethink_gen, arpit2017closer}. Some theoretical remarks on Algorithm \ref{alg:pd} can be found in Appendix \ref{app:convergence}. 


As shown in Algorithm~\ref{alg:pd}, the dual variables accumulate the slacks (i.e., distances between the per-sample secondary loss and constraint values) over the entire learning procedure. This allows the dual variables to be used as a measure of informativeness, while at the same time affecting the local optimum to which the algorithm converges. Quite interestingly, similar ideas on monitoring the evolution of the loss for specific training samples in order to recognize impactful instances have been used in several generalization analyses~\citep{forgetting, notallsamples}.

\begin{algorithm}[t]
\caption{Primal-dual constrained learning (PDCL)}
    \label{alg:pd}
    \begin{algorithmic}[1]
    \STATE {\bfseries Input:} Labeled dataset $\mathcal{L}$, primal learning rate $\eta_p$, dual learning rate $\eta_d$, number of iterations $T$, number of primal steps per iteration $T_p$, constraint vector $\boldsymbol{\epsilon}$.
    \STATE Initialize: $\boldsymbol{\theta}$,  $\boldsymbol{\lambda}\leftarrow\mathbf{0}$.
    \FOR{$t=1, \ldots, T$}
        \STATE 
        $\boldsymbol{\theta} \: \leftarrow \: \boldsymbol{\theta} - \eta_p \nabla_{\theta}\hat{L}(\boldsymbol{\theta}, \boldsymbol{\lambda}) \quad (\times T_p)
        $ \algcomment{Update primal variables ($T_p$ SGD steps)}
        \STATE
        $
        s_{i} \leftarrow \ell' \left(f_{\boldsymbol{\theta}}\left(\boldsymbol{x}_{i}\right), y_{i}\right)-\epsilon_i, ~\forall i\in\mathcal{N}_{\mathcal{L}}.
        $ \algcomment{Evaluate constraint slacks}
        \STATE
        $
        \lambda_{i} \leftarrow \left[\lambda_{i} + \eta_d s_{i} \right]_{+}, ~\forall i\in\mathcal{N}_{\mathcal{L}}.
        $ \algcomment{Update dual variables}
    \ENDFOR
    \STATE {\bfseries Return:}{  $\boldsymbol{\theta}$, $\boldsymbol{\lambda}$.} 
    \end{algorithmic}
\end{algorithm}

\subsection{ALLY: Active Learning via Lagrangian Duality}

Our proposed active learning algorithm, ALLY, is presented in Algorithm \ref{alg:ally} (for the case of $b=1$). Given a set of labeled samples, $\mathcal{L}$, we first obtain the model parameters $\boldsymbol{\theta}^{\star}$ and the dual variables associated to samples in $\mathcal{L}$ using the primal-dual constrained learning approach in Algorithm~\ref{alg:pd}. Taking a representation learning approach~\citep{bengio2013representation, oord2018representation, tian2020contrastive, chen2020simple}, we then partition the model $f_{\boldsymbol{\theta}^{\star}}$ to a \emph{backbone} $f_{\boldsymbol{\phi}^{\star}}:\mathcal{X} \to \mathbb{R}^d$, where $d$ denotes the dimensionality of the \emph{embedding space}, and a \emph{prediction head} $f_{\boldsymbol{\psi}^{\star}}: \mathbb{R}^d \to \mathcal{Y}$, such that $f_{\boldsymbol{\theta}^{\star}} = f_{\boldsymbol{\phi}^{\star}} \circ f_{\boldsymbol{\psi}^{\star}}$ and $\boldsymbol{\theta}^{\star} = \boldsymbol{\phi}^{\star} \cup \boldsymbol{\psi}^{\star}$. In order to estimate the informativeness of the samples in the unlabeled dataset $\mathcal{U}$, we train a \emph{dual regression head} $f_{\boldsymbol{\omega}}: \mathbb{R}^d \to \mathbb{R}^+$ on the embeddings generated by $f_{\boldsymbol{\phi}^{\star}}$ by minimizing the mean-squared error
\begin{align}\label{eq:lambda_mse}
L_{\boldsymbol\lambda}(\boldsymbol{\omega}) = \frac{1}{|\mathcal{N}_{\mathcal{L}}|} \sum_{i\in\mathcal{N}_{\mathcal{L}}} \left\|f_{\boldsymbol{\omega}}(f_{\boldsymbol{\phi}^{\star}}(\boldsymbol{x}_i)) - \lambda_i^{\star} \right\|^2,
\end{align}
while the parameters $\boldsymbol{\phi}^{\star}$, hence the embeddings, are kept frozen. It should be noted that the idea of mapping embeddings to dual variables is present in other machine learning settings \citep{vec2lambda}. Once the dual regression head is trained, we evaluate it on the embeddings corresponding to the unlabeled samples, and identify the sample with the highest predicted dual variable.

As explained in Section~\ref{poolbasedal}, in the batch setting, selecting the $b$ samples with the highest associated dual variables is not optimal, due to the potential information overlap of such samples \citep{al_survey, diversity}. Our method is compatible with any batch diversity approach that takes informativeness scores and unlabeled data points (or embeddings) as inputs.

\begin{algorithm}[t]
  \caption{Active learning via Lagrangian duality (ALLY)}
  \label{alg:ally}
\begin{algorithmic}[1]
  \STATE {\bfseries Input:} Labeled set $\mathcal{L}$,  unlabeled set $\mathcal{U}$, primal learning rate $\eta_p$, dual learning rate $\eta_d$, number of PDCL iterations $T$, number of primal steps per iteration $T_p$, constraint vector $\boldsymbol{\epsilon}$.
  \STATE 
  $
  \boldsymbol{\theta}^{\star},\: \boldsymbol{\lambda}^{\star} \leftarrow  \text{PDCL}(\mathcal{L}, \eta_p, \eta_d, T, T_p, \boldsymbol{\epsilon}).
  $ \algcomment{Run the Primal-Dual Algorithm}
  \STATE
  $
  \boldsymbol{\omega}^{\star} \leftarrow \argmin_{\boldsymbol{\omega}} \frac{1}{|\mathcal{N}_{\mathcal{L}}|} \sum_{i\in\mathcal{N}_{\mathcal{L}}} \left\|f_{\boldsymbol{\omega}}(f_{\boldsymbol{\phi}^{\star}}(\boldsymbol{x}_i)) - \lambda_i^{\star} \right\|^2.
  $ \algcomment{Train the dual regression head}
  \STATE 
  $
  j^{\star} \: \leftarrow  \: \arg \max_{j\in\mathcal{N}_{\mathcal{U}}} f_{\boldsymbol{\omega}^{\star}}(f_{\boldsymbol{\phi}^{\star}}(\boldsymbol{x}_j))
  $ \algcomment{Find sample with highest dual variable}
  \STATE {\bfseries Return:}{ $j^{\star}$. }
\end{algorithmic}
\end{algorithm}

\subsection{Connection to BADGE~\citep{BADGE}}

The BADGE method~\citep{BADGE} uses the gradient of the loss function with respect to the parameters of the last layer, denoted by $\boldsymbol{\theta}_L$, as a measure of informativeness, i.e.,
\begin{align}
\label{xoptbadge}
 \frac{\partial\ell (f_{\boldsymbol{\theta}}(\boldsymbol{x}), \hat{y}(\boldsymbol{x}))}{\partial \boldsymbol{\theta}_L} ,
\end{align}
where $\hat{y}(\boldsymbol{x})$ is the \emph{hypothetical} label of $\boldsymbol{x}$, defined as $\hat{y}(\boldsymbol{x})\coloneqq \arg \max_{y\in\mathcal{Y}} [f_{\boldsymbol{\theta}}(\boldsymbol{x})]_y$. In contrast, as discussed in Theorem~\ref{theo:obj_derivative}, to evaluate the informativeness of a given sample, ALLY observes
\begin{align}
\frac{ \partial P^{\star} }{ \partial \epsilon(\boldsymbol{x})} = \frac{ \partial P^{\star} }{ \partial \boldsymbol{\theta}} \: \frac{ \partial \boldsymbol{\theta}}{ \partial \epsilon(\boldsymbol{x})} =  \frac{ \partial \mathbb{E}_{(\boldsymbol{x}, y) \sim \mathfrak{D}}\left[\ell \left(f_{\boldsymbol{\theta^{\star}}}(\boldsymbol{x}), y\right)\right] }{ \partial \boldsymbol{\theta}} \: \frac{ \partial \boldsymbol{\theta}}{ \partial \epsilon(\boldsymbol{x})}.\label{usvbadge}
\end{align}

There are three main differences between these informativeness measures:
\begin{itemize}
    \item ALLY uses the derivative of the average optimal loss \emph{over the entire distribution}, whereas BADGE only considers the point-wise derivative. In fact, most strategies evaluate their scoring function, or informativeness measure, on a single sample. Note that the term  $\frac{ \partial \mathbb{E}_{(\boldsymbol{x}, y) \sim \mathfrak{D}}\left[\ell \left(f_{\boldsymbol{\theta^{\star}}}(\boldsymbol{x}), y\right)\right] }{ \partial \boldsymbol{\theta}}$ is constant for all $\mathbf{x}$.
    \item ALLY observes the gradient with respect to \emph{all} model parameters, not only the ones in the last layer.
    \item Aside from the derivative of the loss with respect to the model parameters, ALLY also considers an additional term $\frac{ \partial \boldsymbol{\theta}}{ \partial \epsilon(\boldsymbol{x})}$, which models how the results of the optimization (i.e., model parameters) change when the constraint function is perturbed.
\end{itemize} 

\subsection{Interpreting the Informativeness Score of ALLY}
\label{sec:generation}

Our proposed framework allows us to leverage the trained backbone and dual regression head in a \emph{generative} manner to create novel samples that are most informative. Here, we focus on the MNIST dataset, and use the trained model to generate synthetic images with maximal associated dual variables. We begin by training a MLP 
on 10\% of the MNIST dataset. Then, similarly to~
\citep{goodfellow_adv}, we perform gradient ascent on images that are initially considered uninformative, so as to maximize their predicted dual variables. The progression of the resulting images is shown in Figure \ref{fig:sample_generation}. 

\begin{figure}[b]
    \centering
    \includegraphics[trim=1.5in 1.5in 1.2in 1.5in, clip, width = 0.4\linewidth]{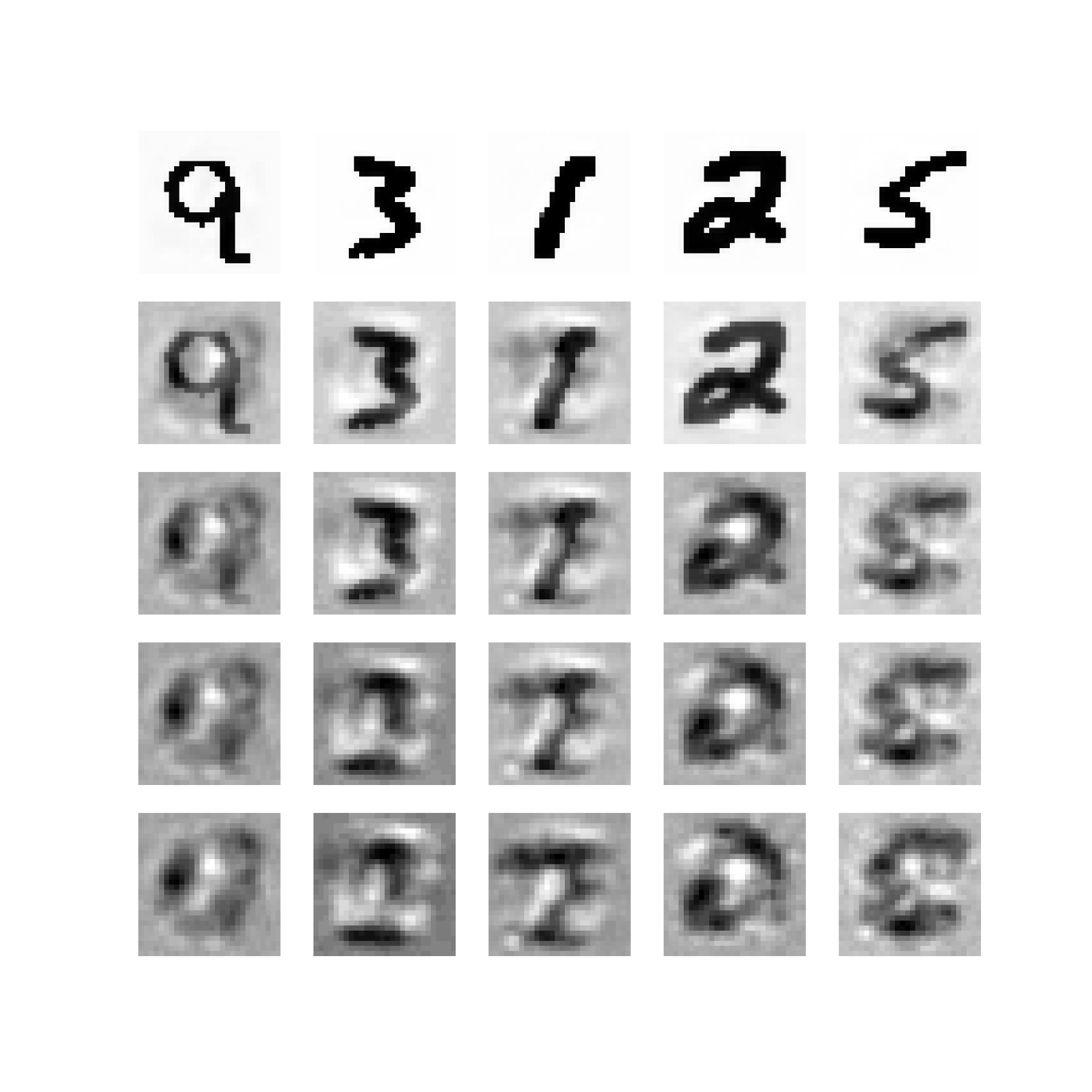}
    \caption{Sample generation by maximization of predicted dual variables. The top row shows the initial images from MNIST, to which the predictor associates a low dual variable. Rows 2-5 display the images resulting from subsequent iterations of gradient ascent.}
    \label{fig:sample_generation}
\end{figure}

As the predicted dual variable increases, patterns corresponding to other digits appear, increasing the uncertainty on the true label of the image. For instance, the third column of Figure \ref{fig:sample_generation} shows a handwritten `1' that is progressively transformed into a blurred superposition of `7,' `2' and '1'. Images in the last row of Figure \ref{fig:sample_generation} can be interpreted as lying in the tails of the distribution $\mathfrak{D}$, or close to the decision boundary of the end-to-end model $f_{\boldsymbol{\theta}}$. This also suggests that informative samples and outliers (such as mislabeled samples) may be hard to distinguish. Recent empirical findings indicate that many active learning algorithms consistently prefer to acquire samples that traditional models fail to learn \citep{outliers}. Thus, modifying ALLY in order to avoid sampling these so-called \emph{collective outliers} (e.g., by setting an upper bound on the dual variable associated to the queries) would be desirable in datasets that are not highly curated.

\vspace{-.15in}
\section{Experimental Evaluation}
\label{sec:experiments}

\subsection{Settings}\label{sec:experiments_setts}
We consider four image classification tasks and one biomedical, non-image regression task. In the classification setting, we use standard datasets that commonly appear in the active learning literature, namely STL-10 \citep{stl-10}, CIFAR-10 \citep{cifar10}, SVHN \citep{svhn} and MNIST \citep{mnist}. Lacking an established benchmark regression dataset for active learning, we evaluate ALLY on the Parkinsons Telemonitoring dataset (PTD) \citep{parkinsondataset}. In this regression task, the goal is to predict UPDRS (Unified Parkinson's Disease Rating Scale) scores from dysphonia measurements such as variation in fundamental frequency. Since measurements in this dataset are the result of a \textit{costly} clinical trial that requires expert knowledge, this task is a prime example in which active learning might be essential. 

In all experiments, the initial labeled set $\mathcal{L}_0$ consists of 200 randomly drawn samples, and the budget is set to either $b=200$ or $b=1000$. We use a ResNet-18 architecture~\citep{resnet} with an embedding size of 128. In the case of MNIST and PTD, which are simpler tasks, we use a multi-layer perceptron (MLP) with two hidden layers, each with 256 neurons and rectified linear unit (ReLU) activation, leading to an embedding size of 256. The dual regression head $f_{\omega}$ is a MLP with 5 hidden layers, ReLU activations and batch normalization. 


As done in \citep{div_kmeans1, div_kmeans2}, to ensure diversity in the batch, we cluster the embeddings of the unlabeled samples, i.e., $\{f_{\boldsymbol{\phi}^{\star}}(\boldsymbol{x}_j)\}_{j \in {\mathcal{N_U}}}$, using the $k$-MEANS clustering algorithm \citep{kmeans}, where $k \leq b$ is a hyperparameter. We then select the samples with the highest associated dual variables from each cluster, while maintaining equity among the number of samples per cluster. As shown in Appendix~\ref{app:ablation_clust}, the performance of ALLY improves with an increased number of clusters $k$, since it leads to a more diverse batch of selected samples. Therefore, in all our experiments, we set $k=b$. 
\begin{figure}[b]
     \centering
     \begin{subfigure}[b]{\textwidth}
         \centering
         \includegraphics[width=.8\textwidth]{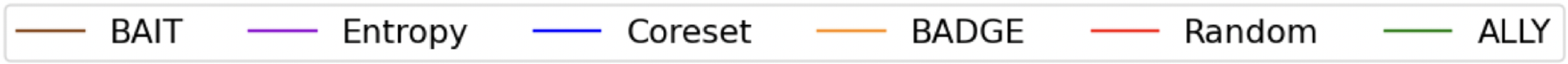}
     \end{subfigure}
     \begin{subfigure}[b]{0.24\textwidth}
         \centering
         \includegraphics[width=.99\textwidth]{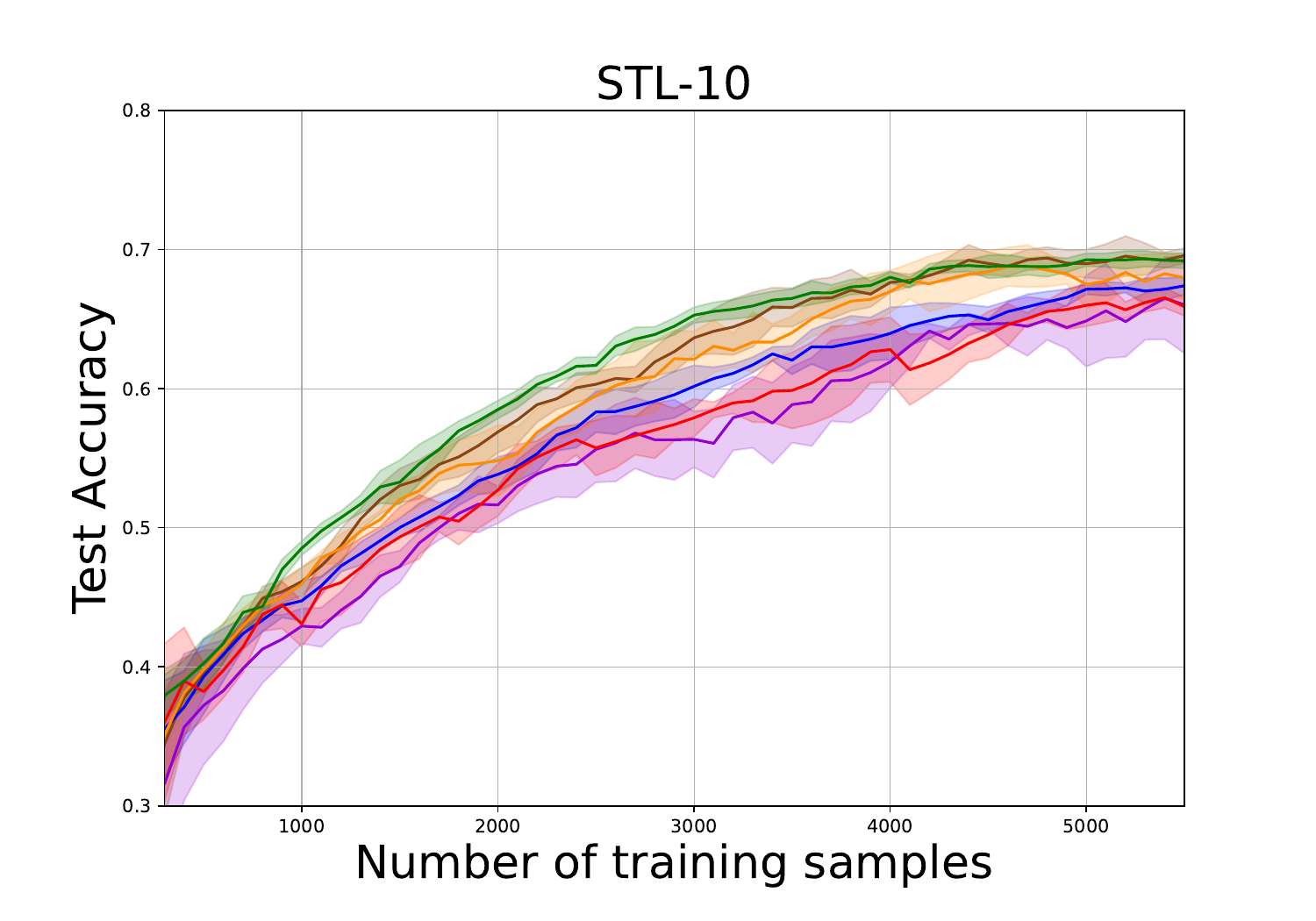}
     \end{subfigure}
     \begin{subfigure}[b]{0.24\textwidth}
         \centering
         \includegraphics[width=.99\textwidth]{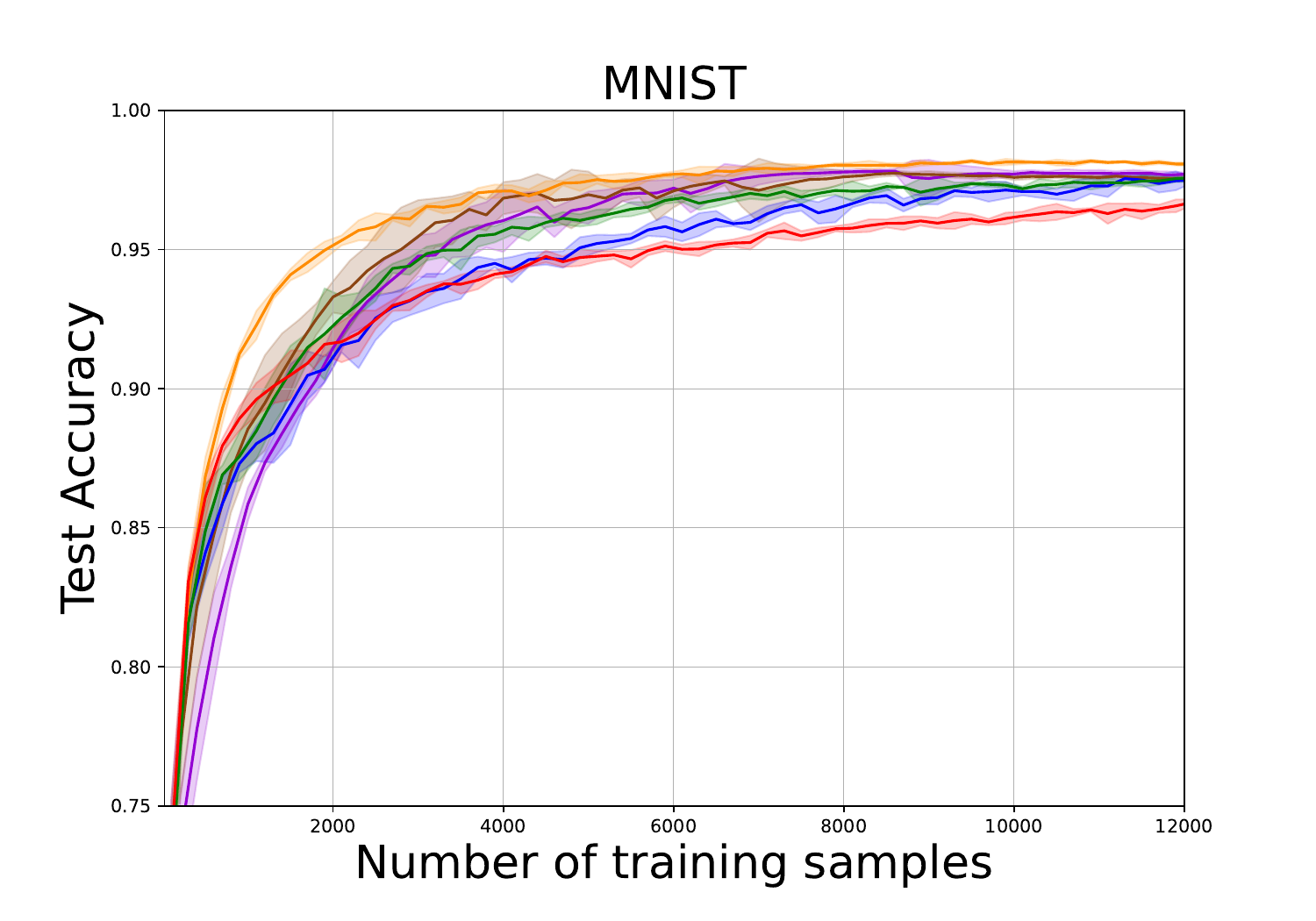}
     \end{subfigure}
     \begin{subfigure}[b]{0.24\textwidth}
         \centering
         \includegraphics[width=.99\textwidth]{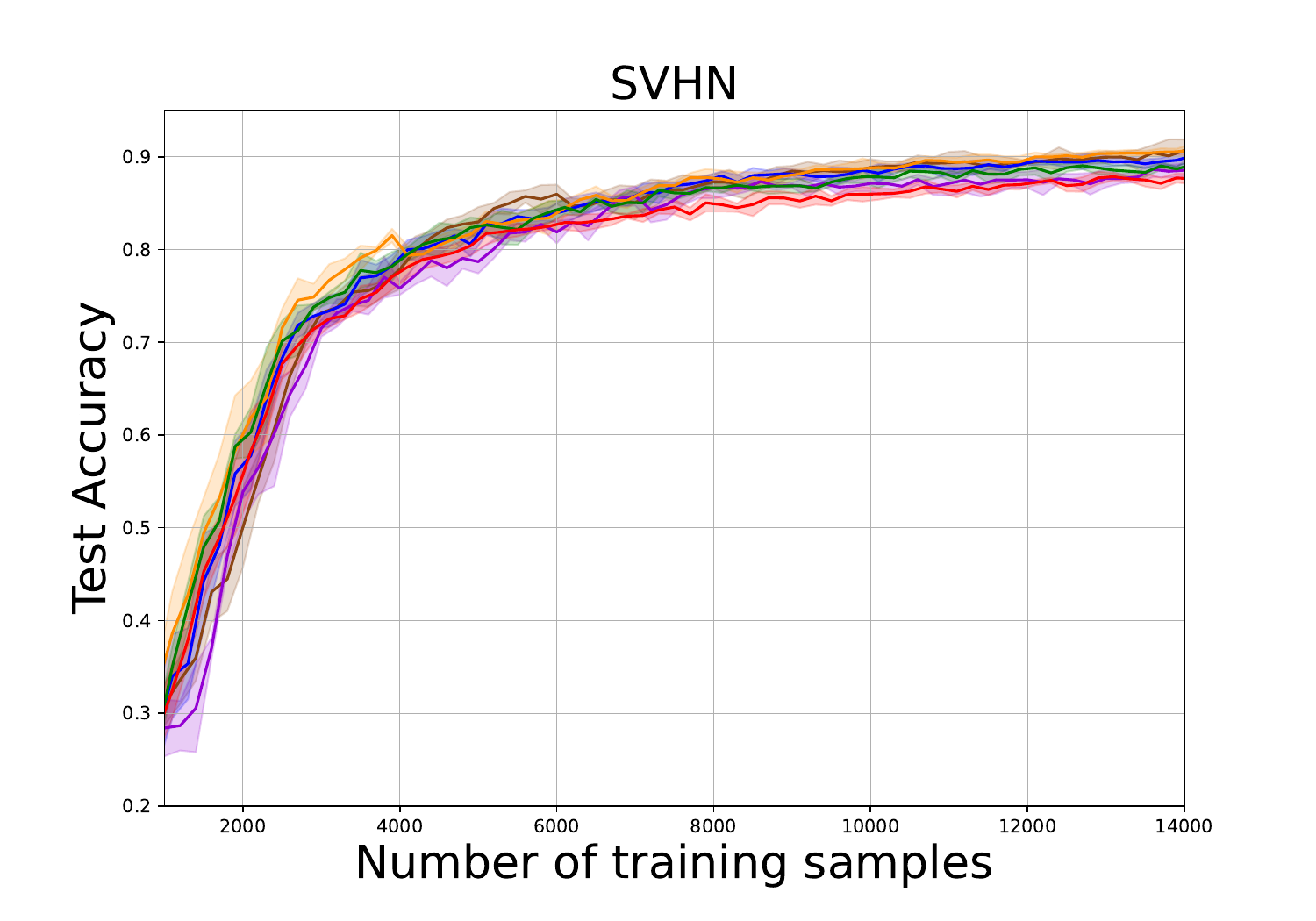}
     \end{subfigure}
         \begin{subfigure}[b]{0.24\textwidth}
         \centering
         \includegraphics[width=.99\textwidth]{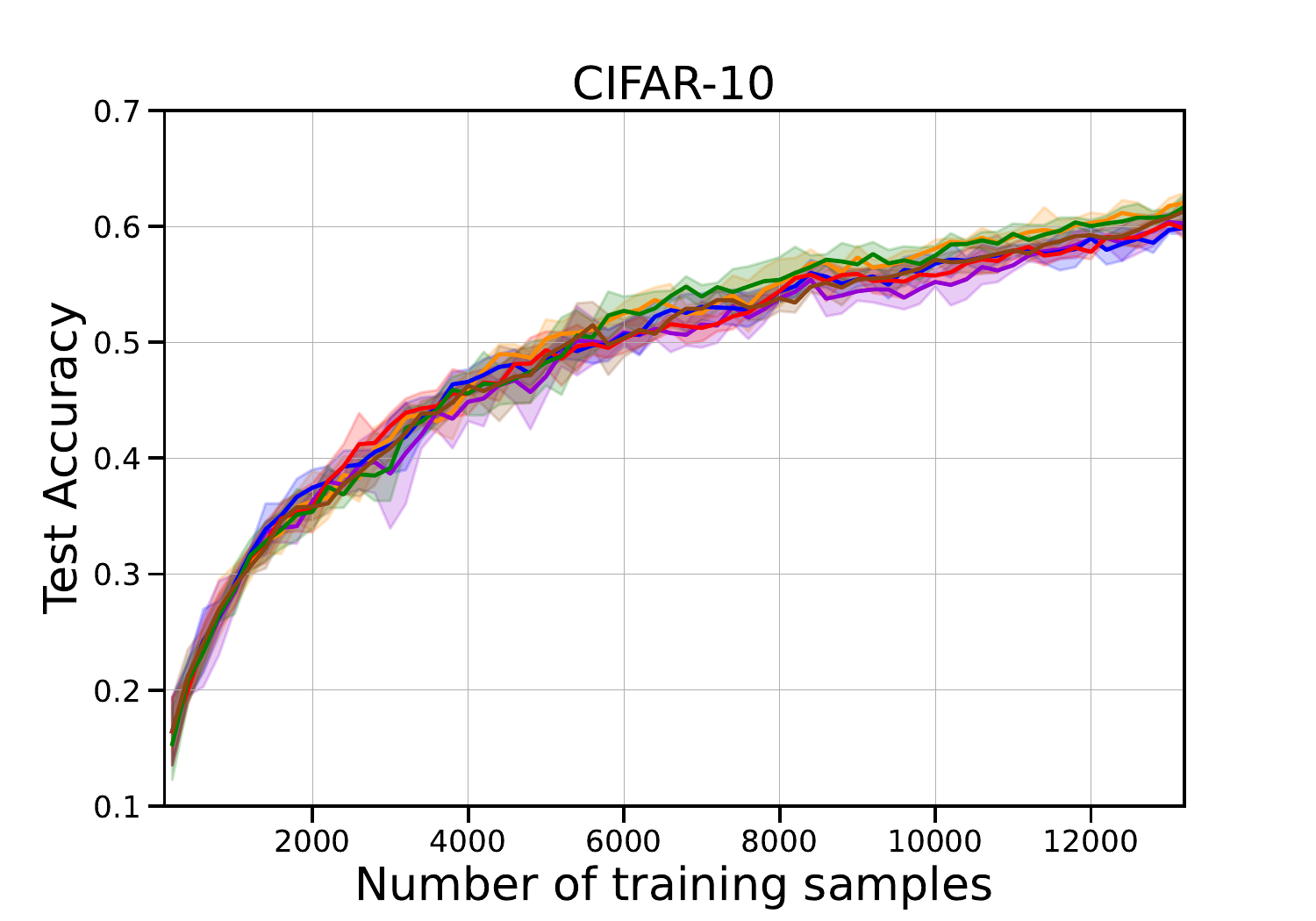}
\end{subfigure}
     \begin{subfigure}[b]{0.24\textwidth}
         \centering
         \includegraphics[width=.99\textwidth]{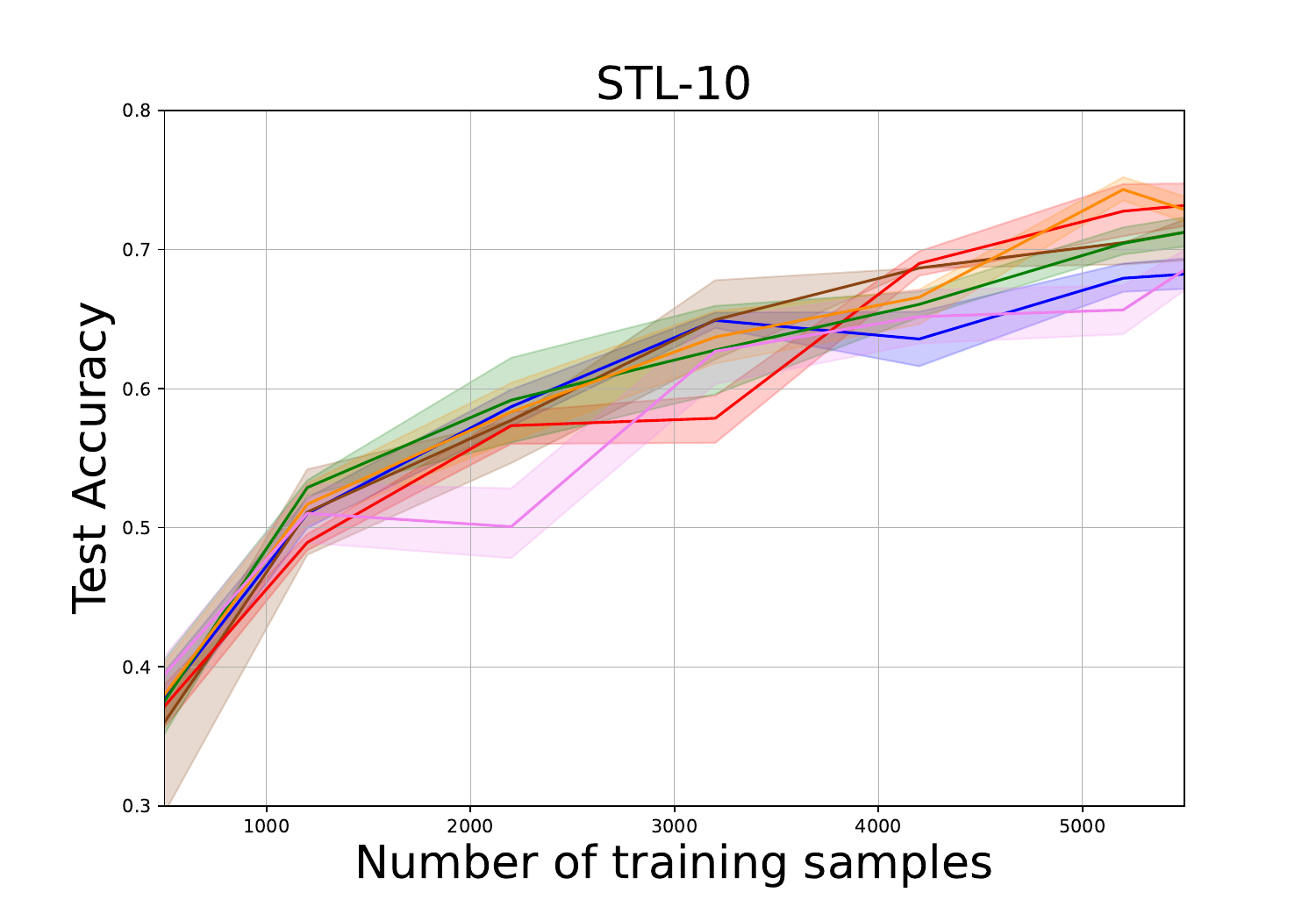}
     \end{subfigure}
     \begin{subfigure}[b]{0.24\textwidth}
         \centering
         \includegraphics[width=.99\textwidth]{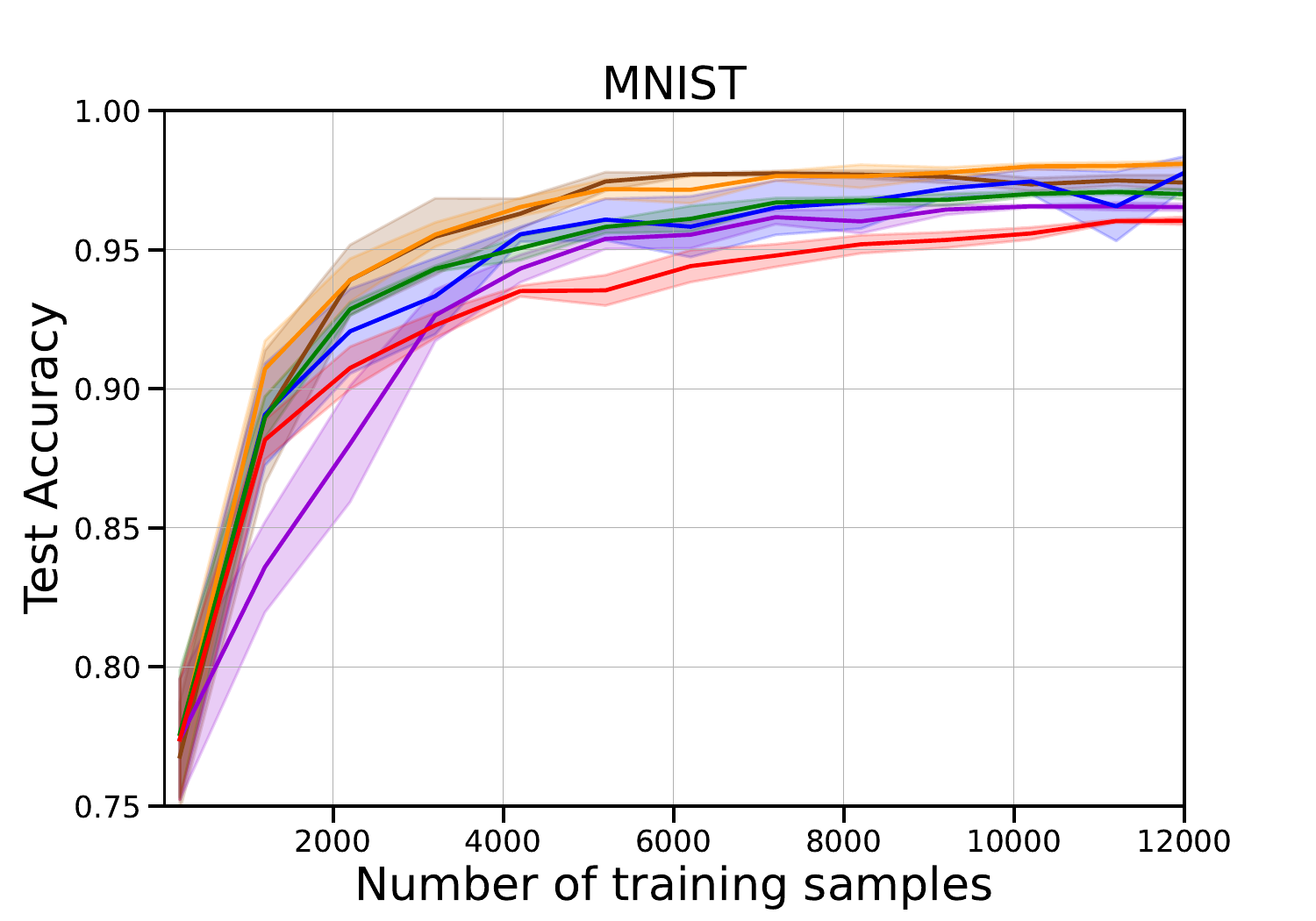}
     \end{subfigure}
     \begin{subfigure}[b]{0.24\textwidth}
         \centering
         \includegraphics[width=.99\textwidth]{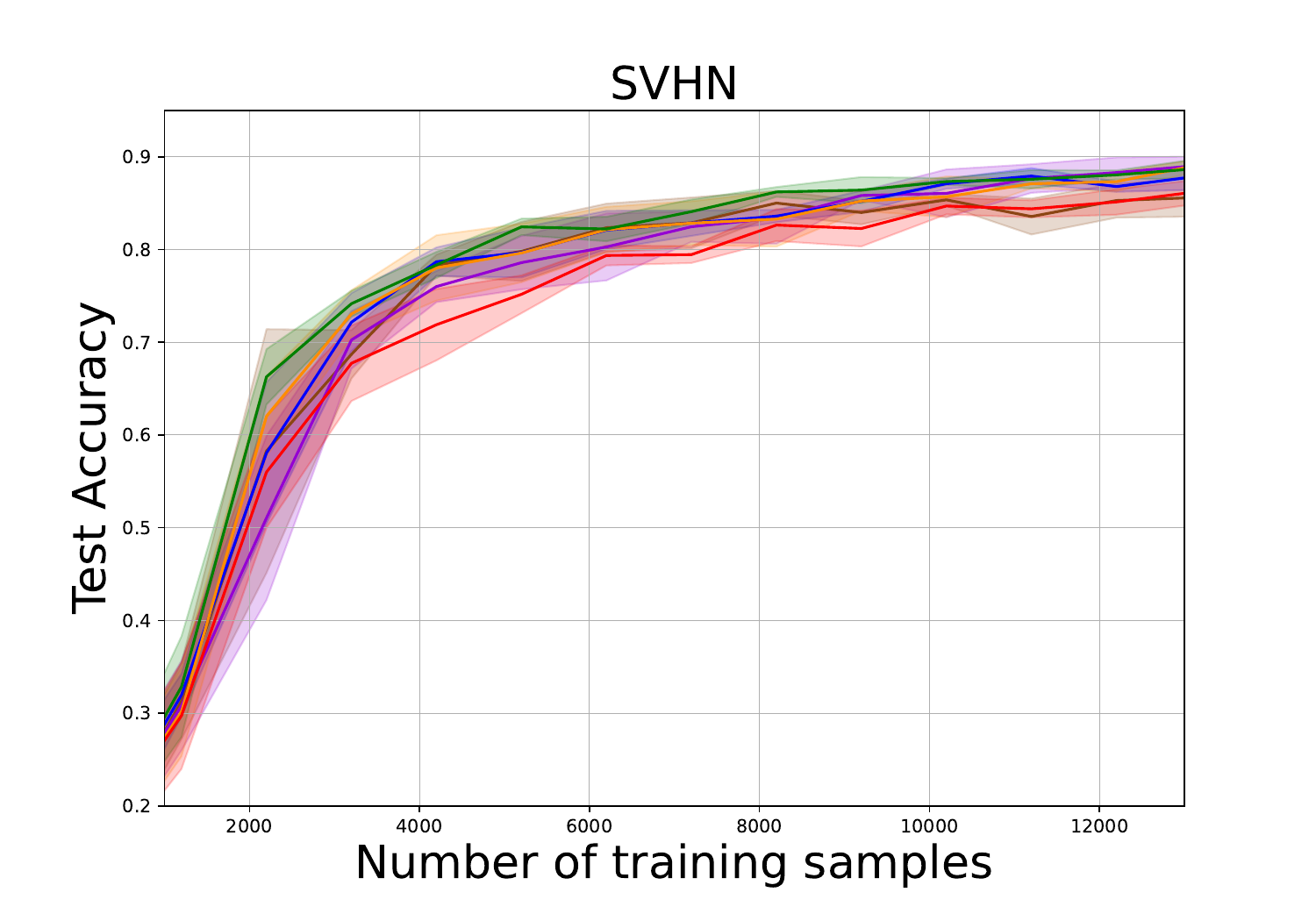}
     \end{subfigure}
         \begin{subfigure}[b]{0.24\textwidth}
         \centering
         \includegraphics[width=.99\textwidth]{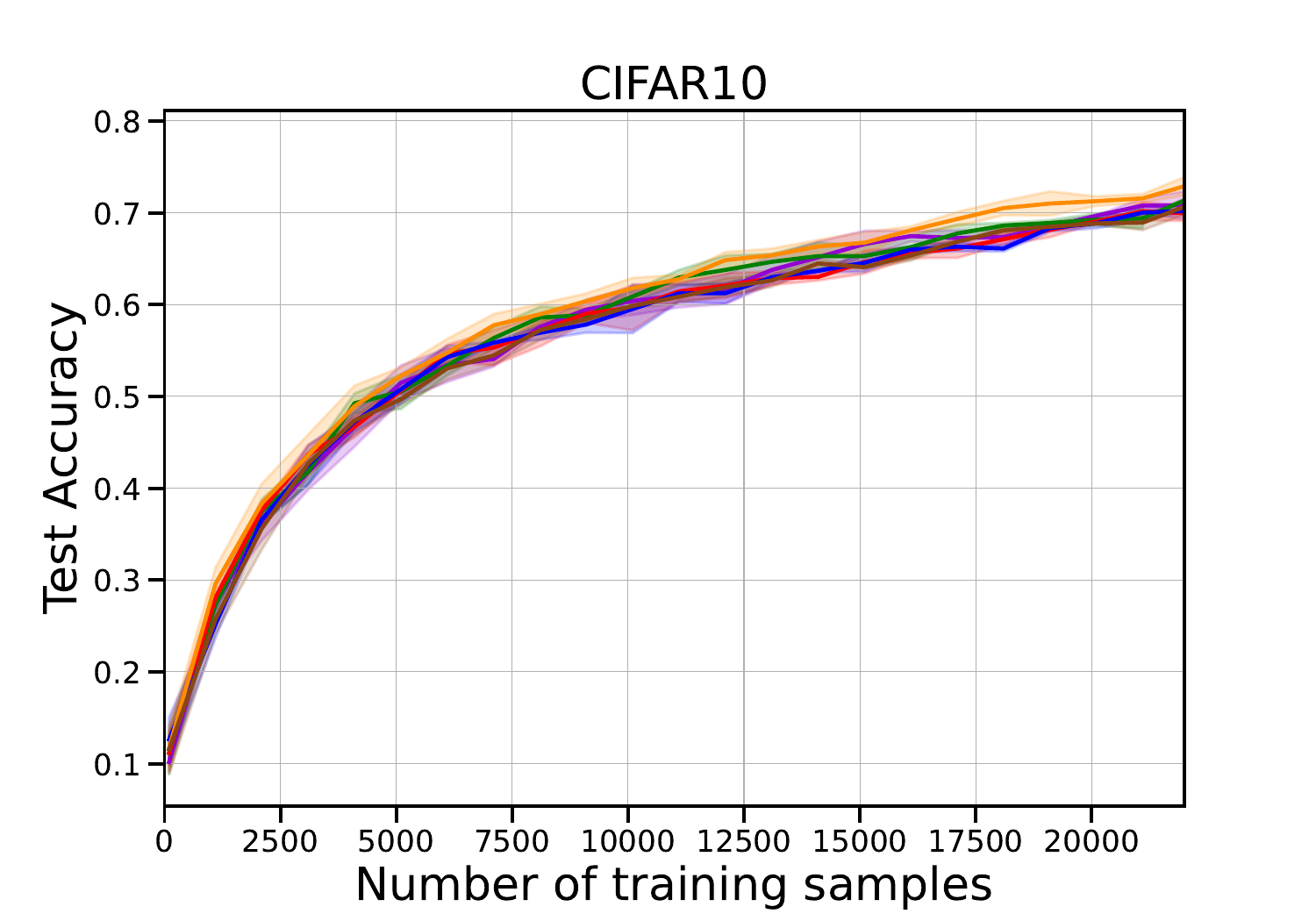}
     \end{subfigure}

\caption{Accuracy in the test set as a function of the number of training samples in four classification settings and two budgets, $200$ (top) and $1000$ (bottom). Solid curves represent the mean across five different random seeds, while shaded regions correspond to the standard deviation.}
\label{fig:classification}
\end{figure}

Regarding the role of the secondary loss, we opted for a generic formulation as the performed sensitivity analysis in Theorem~\ref{theo:obj_derivative} holds for various choices of $\ell'$. However, in all our experiments, we set $\ell'(\cdot, \cdot) = \ell(\cdot, \cdot)$. We believe that using unsupervised or self-supervised losses for $\ell'$ is a promising research direction, and we leave it for future work.

We compare our algorithm with Entropy sampling, BADGE, Coreset and BAIT. While Entropy sampling focuses purely on uncertainty, BADGE and BAIT balance both diversity and informativeness with different approaches. Coreset differs from the previous methods in that it focuses on batch representativeness by framing active learning as a coreset selection problem. It has been observed that, in some scenarios, several active learning methods fail to consistently outperform Random Sampling \citep{outliers, simil_coreset, disc_active_learning}. We thus include it as one of the five baselines. As explained in section \ref{proposed_app}, ALLY uses a primal-dual approach to learn $f(\mathbf{x}; \mathcal{L} \cup \mathcal{U)}) $, while all other baselines are optimized using stochastic gradient descent (ADAM). We adopt the PyTorch \citep{pytorch} implementation of the baselines from \citep{libact}. 

\subsection{Classification}

The experiments on STL-10, CIFAR-10, SVHN and MNIST are all 10-class, image classification tasks. We use the cross-entropy loss for both $\ell(\cdot, \cdot)$ and $\ell'(\cdot, \cdot)$ and set $\epsilon(\boldsymbol{x}) =  0.2, \, \forall \boldsymbol{x}$. As shown in Figure \ref{fig:classification}, ALLY exhibits top-2 performance with respect to other baselines in STL-10 (budget 200), SVHN (budget 1000) and CIFAR-10 (budget 200). In these three datasets, the improvement in the number of samples needed by ALLY to achieve 97\% of the final accuracy, in comparison with the best baseline, is 9\%, 8\% and 2\%, respectively. In MNIST, however, BADGE and BAIT consistently outperform ALLY. This may be due to the fact that the MLP, being less expressive, yields embeddings of lower quality, hindering the prediction of dual variables.



\subsection{Regression}

We use mean-squared error for both $\ell(\cdot, y)$ and $\ell'(\cdot, y)$ and set $\epsilon(\boldsymbol{x}) = 0.1, \, \forall \boldsymbol{x}$. As seen in Figure \ref{fig:regression}, ALLY outperforms both Random and Coreset in this regression task, the gap being larger at the beginning of the learning curve. Note that BADGE and Entropy are not applicable, since they are limited to classification scenarios. 

\vspace{-.1in}
\begin{figure}[h]
  \centering
  \includegraphics[width=.32\textwidth]{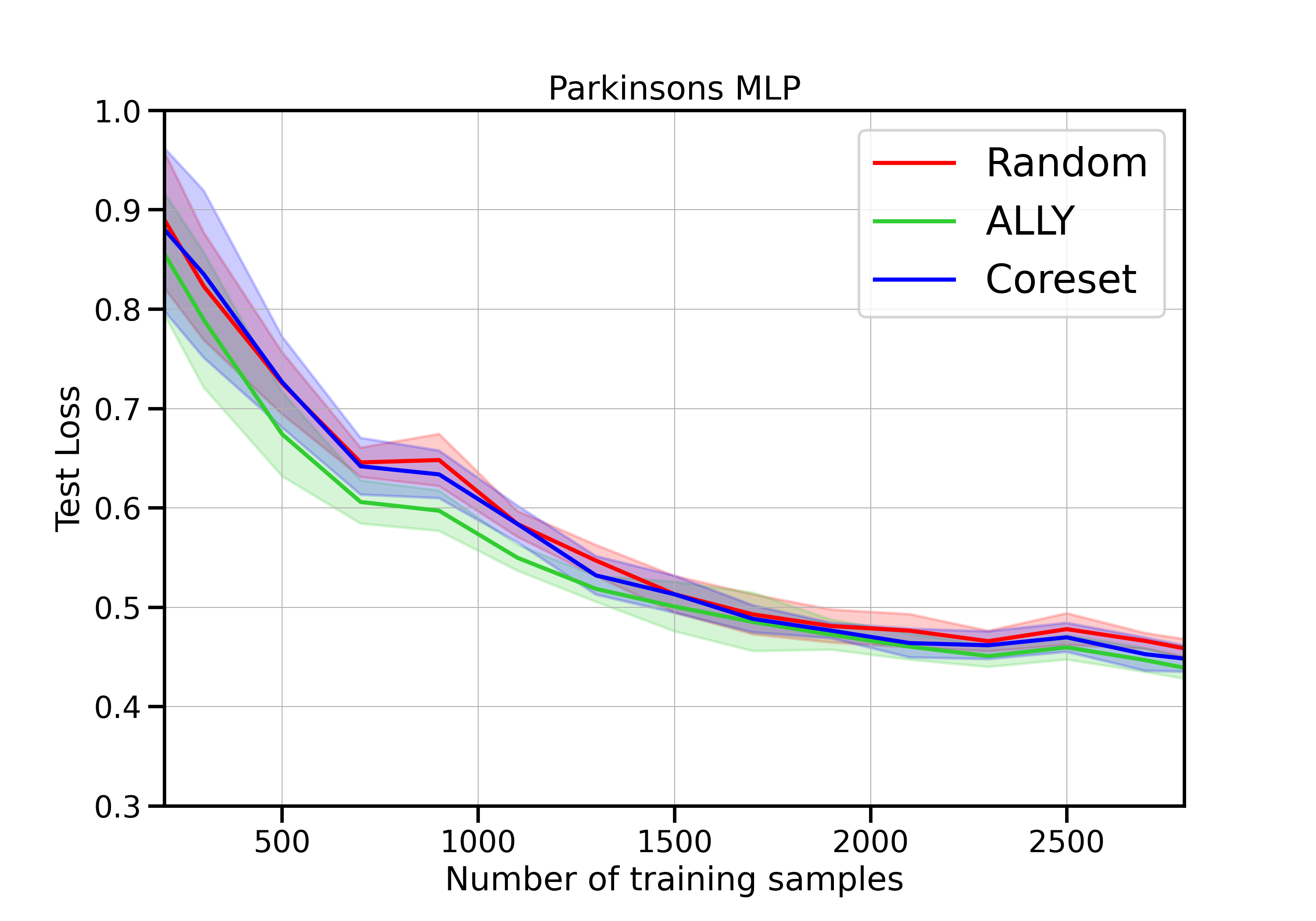}
  \captionof{figure}{Mean-squared error in the test set as a function of the number of training samples in the Parkinson's telemonitoring dataset.}
  \label{fig:regression}
\end{figure}



\vspace{-.1in}
\subsection{Ablation on the Constraint Tightness}
\begin{figure}[b]
     \centering
     \begin{subfigure}[b]{\textwidth}
         \centering
         \includegraphics[width=.3\textwidth]{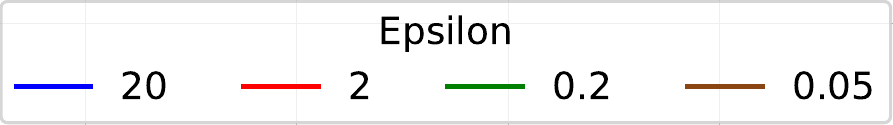}
     \end{subfigure}
     \begin{subfigure}[b]{0.3\textwidth}
         \centering
         \includegraphics[width=.99\textwidth]{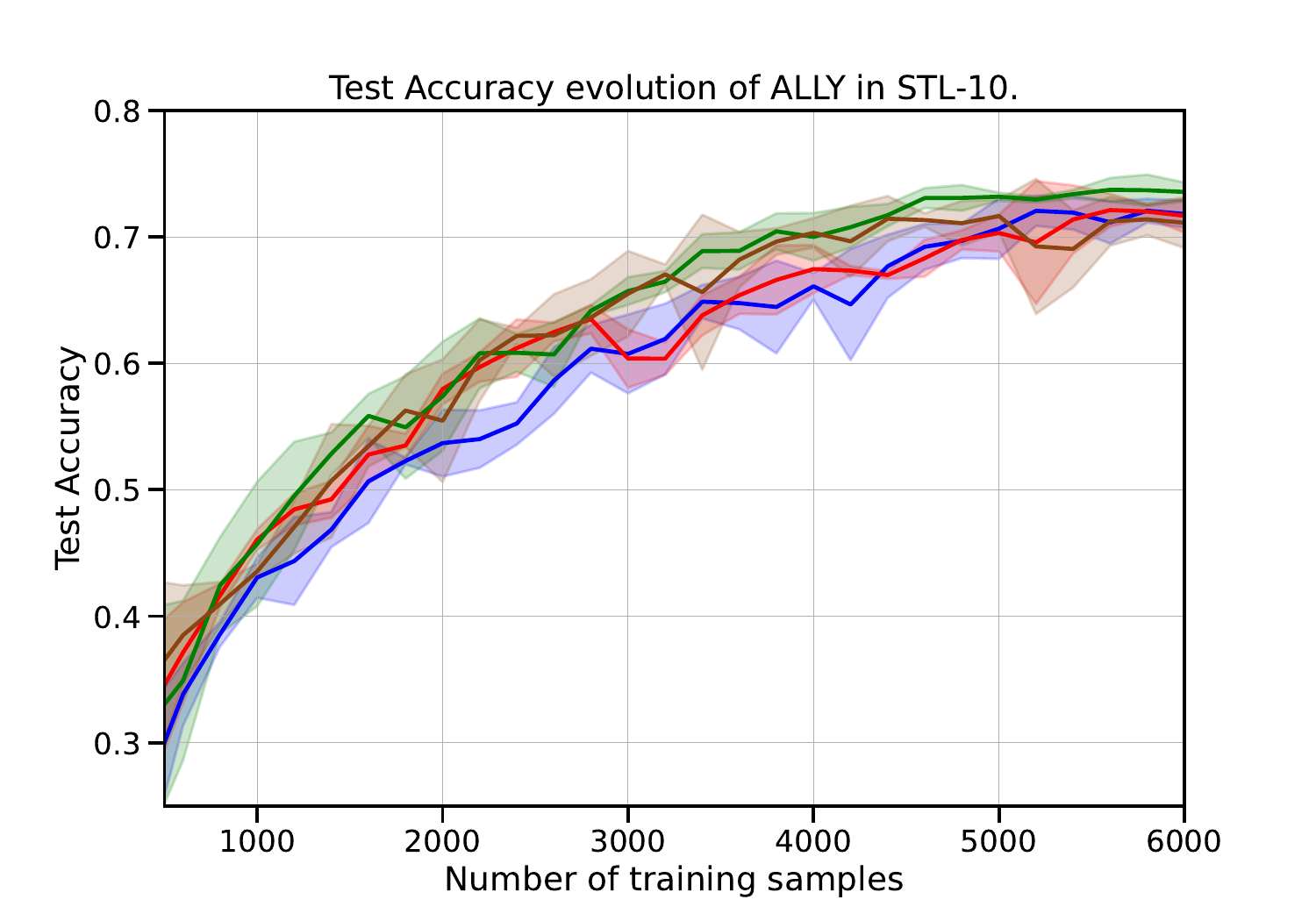}
     \end{subfigure}
     \begin{subfigure}[b]{0.3\textwidth}
         \centering
         \includegraphics[width=.99\textwidth]{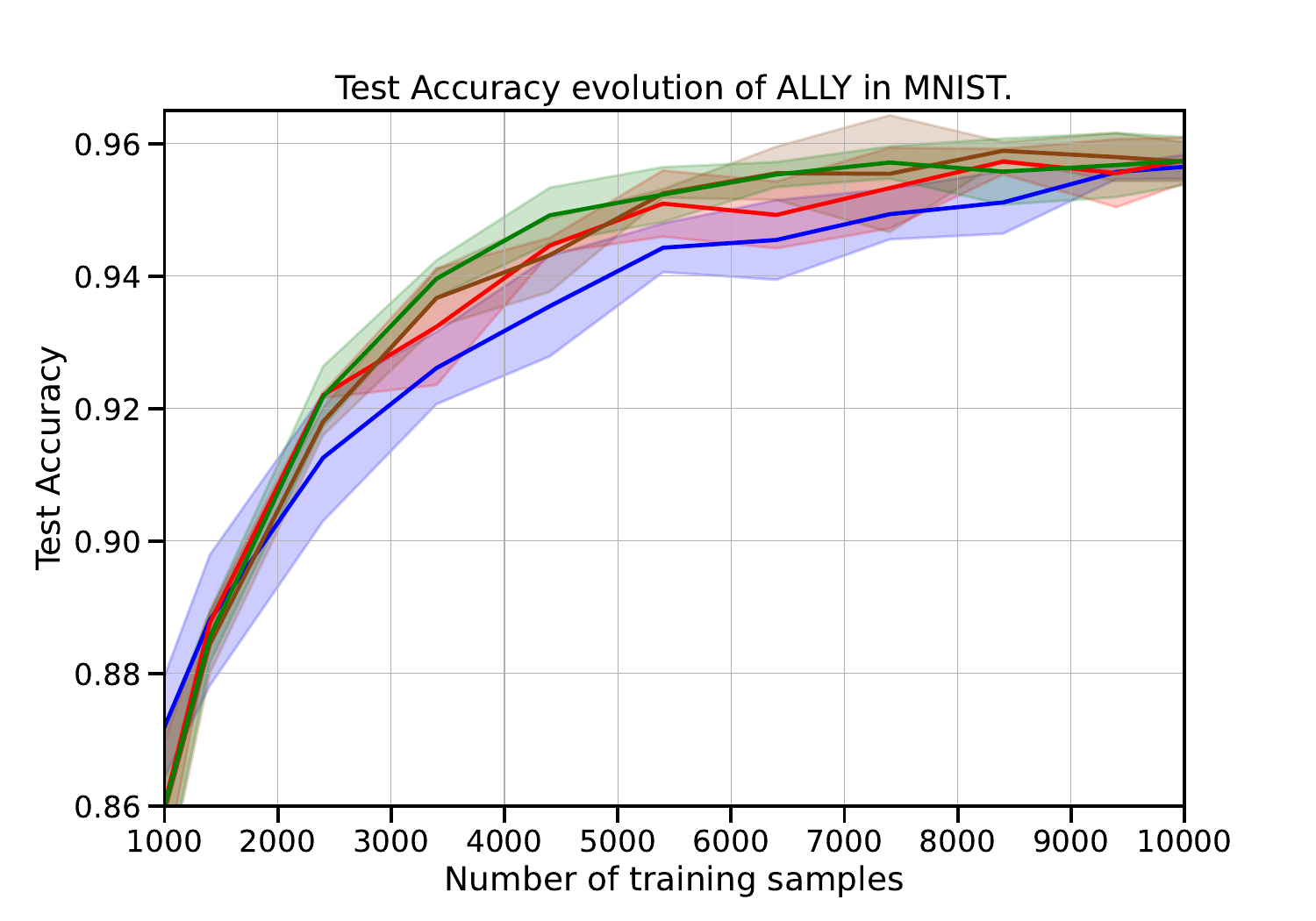}
     \end{subfigure}

\caption{Ablation on the value of the constraint tightness $\epsilon$ on the performance of ALLY in STL-10 and MNIST.}
\label{fig:ablation}
\end{figure}
Figure~\ref{fig:ablation} illustrates the role $\epsilon$ plays in the optimization procedure. For extremely large values of epsilon (e.g., $20$ nats), the constraint slacks become negative for all samples, and thus all dual variables become zero, making them uninformative (analogous to an unconstrained problem). Large values (e.g., $2$ nats) can potentially be used to detect outliers. Our ablations suggest that values in the range $[1.05p_u \, , \, 1.25p_u]$, where $p_u$ is the average loss observed when training the model without constraints, work well in practice. The sensitivity of the method with respect to the constraint level is impacted both by the task/dataset at hand and the diversity technique used. It is reasonable to expect K-means to be less sensitive to the constraint level than stochastic perturbations of the informativeness scores.

\subsection{Dataset Redundancy}

We study the performance of ALLY and Entropy Sampling with varying levels of dataset redundancy in STL-10. We define the level of dataset redundancy as the number of copies of each sample present in the training set.  We then analyze the evolution of test accuracy over several rounds of active learning with a query budget of 200.

\begin{figure}[H]
     \centering
     \begin{subfigure}[b]{\textwidth}
         \centering
         \includegraphics[width=.32\textwidth]{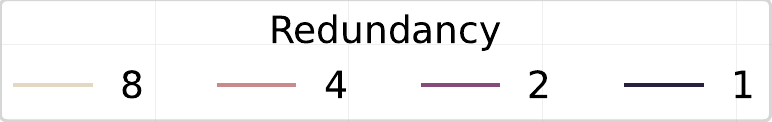}
     \end{subfigure}
     \begin{subfigure}[b]{0.32\textwidth}
         \centering
         \includegraphics[width=.99\textwidth]{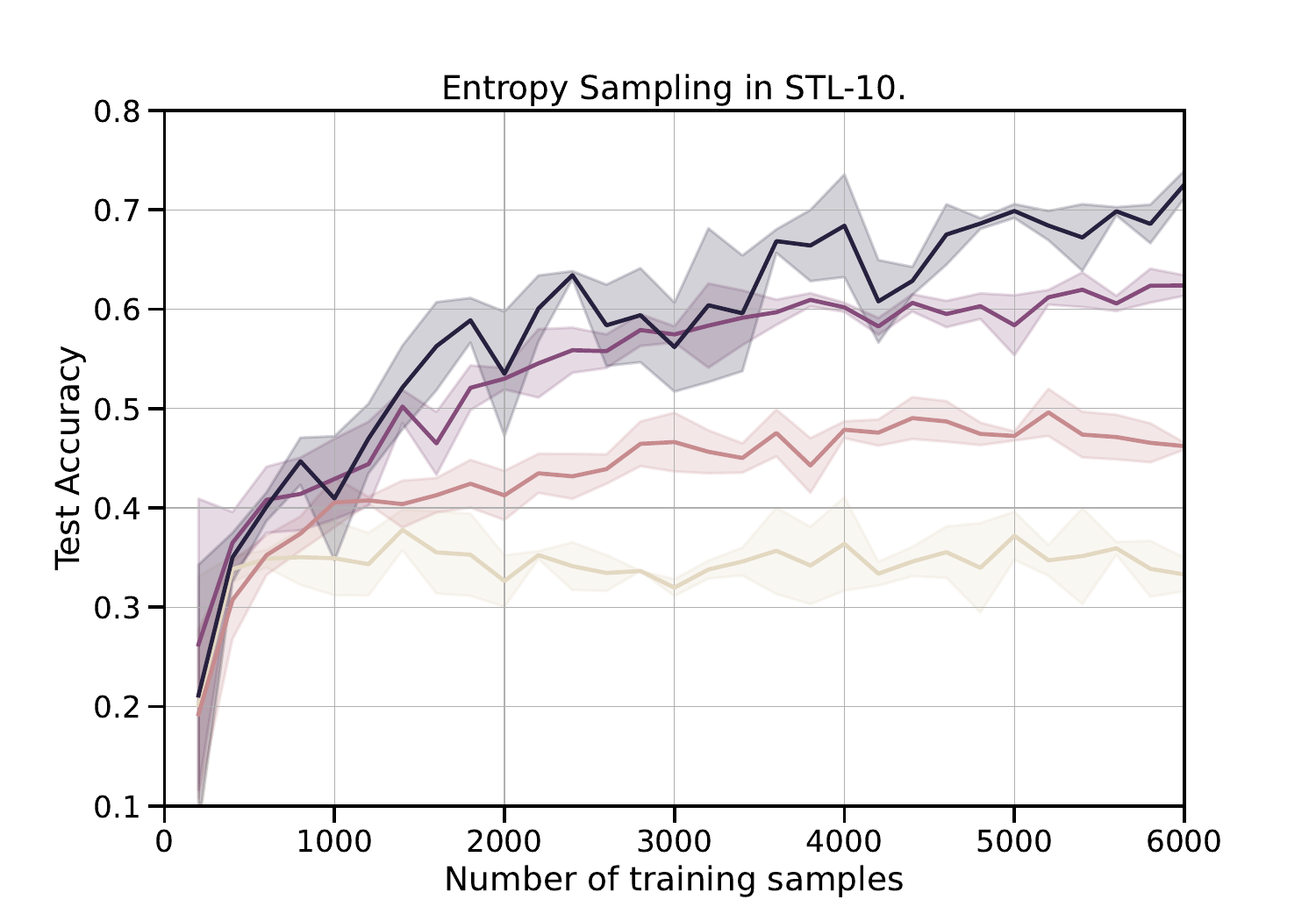}
     \end{subfigure}
     \begin{subfigure}[b]{0.32\textwidth}
         \centering
         \includegraphics[width=.99\textwidth]{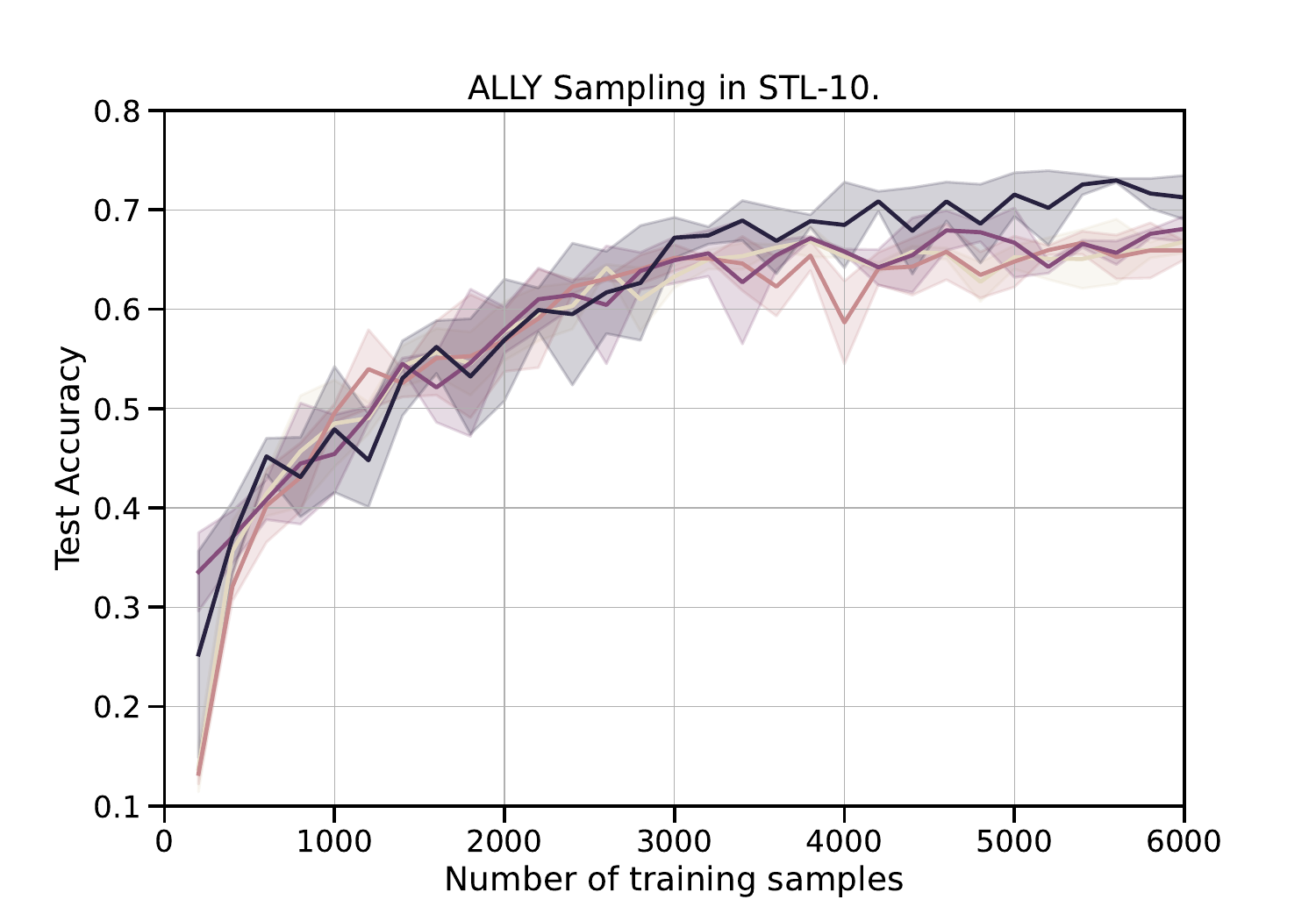}
     \end{subfigure}
    \begin{subfigure}[b]{0.32\textwidth}
         \centering
         \includegraphics[width=.99\textwidth]{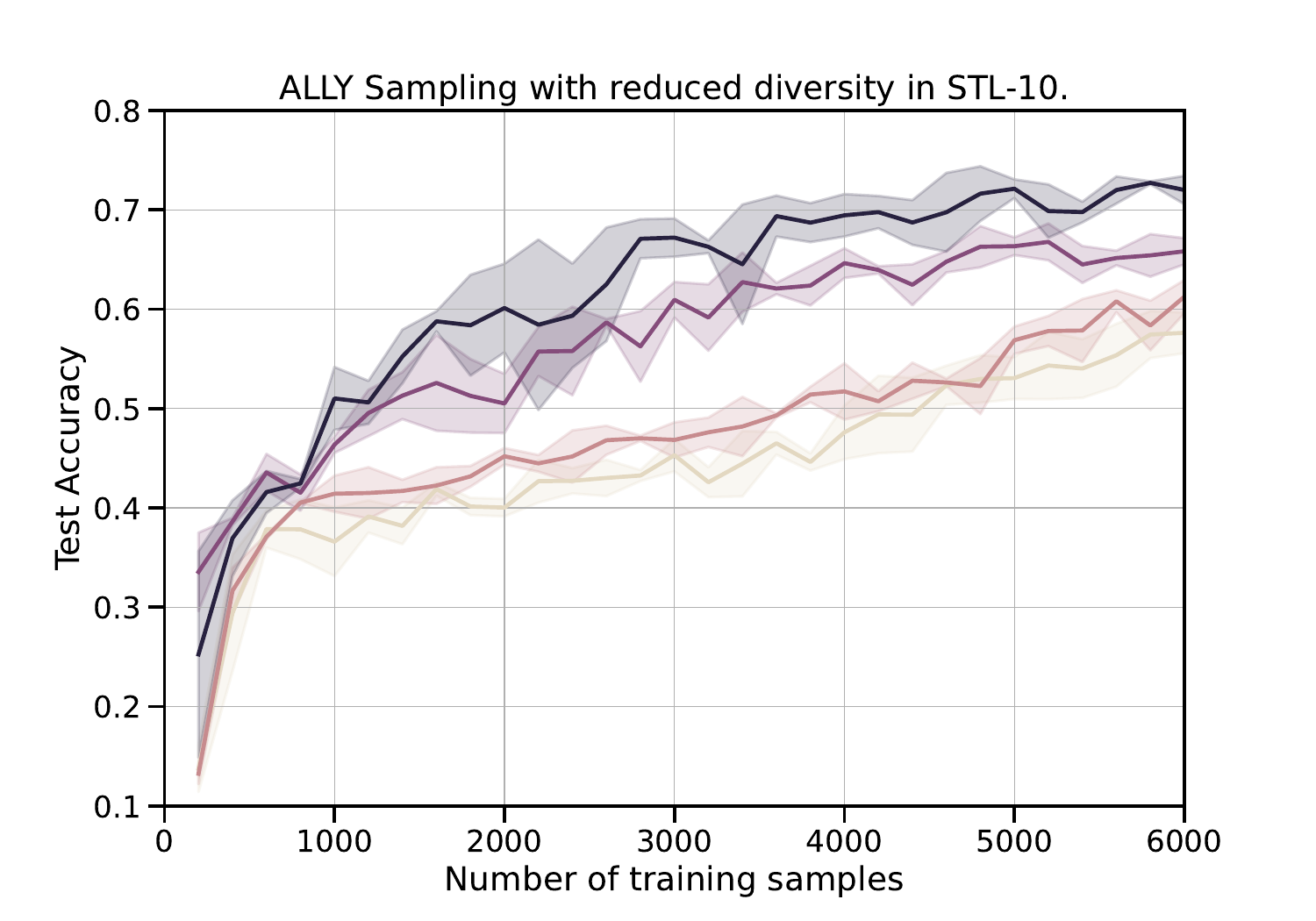}
     \end{subfigure}

\caption{Impact of dataset redundancy on the performance of Entropy Sampling (left), ALLY (center) and low-diversity ALLY (right).}
\label{fig:redundancy}
\end{figure}
\vspace{-.3in}
We observe that a strategy solely based on informativeness (e.g., Entropy Sampling) is very sensitive to this type of dataset redundancy. This can be attributed to the fact that, at each round, copies of the most informative samples are queried simultaneously, leading to low batch diversity. Moreover, ALLY appears to be more robust than Entropy Sampling to this type of dataset corruption. This is not surprising, since only one sample is queried from each cluster, avoiding batch information overlap. However, when increasing the number of samples queried from each batch, the performance of ALLY is degraded, resembling that of Entropy Sampling. This experiment suggests that the impact of dataset redundancy on active learning strategies is more linked to the diversity technique used than the informativeness measure. 
Whether this behaviour generalizes to other types of redundancy is left for future work.

\vspace{-.05in}
\section{Concluding Remarks}

We presented ALLY, a principled batch active learning method based on Lagrangian duality. Our method formulates the learning problem using constrained optimization and solves it in the Lagrangian dual domain via a primal-dual approach. We leverage the fact that the magnitude of the dual variables can be viewed as a measure of informativeness of the corresponding training sample, as it indicates the sensibility of the optimum value of the objective function with respect to a perturbation in the respective constraint. 

Following the completion of the primal-dual learning phase, we employ the learned sample representations, as well as their respective dual variables, to train a dual regression head. This predictor is used to estimate the dual variables associated to unlabeled samples. The resulting informativeness measure is compatible with several batch diversity promoting techniques. Using the k-MEANS algorithm, we demonstrated that this principled method exhibits competitive performance in several classification and regression experiments. We also showed that under certain conditions, the impact of the distribution shift induced by active sampling is small.

In our experiments, we have set the secondary loss to be identical to the primary supervised loss (i.e., cross-entropy loss for classification, and mean-squared error for regression). However, 
evaluating the performance of ALLY under  alternative \emph{unsupervised} or \emph{self-supervised} secondary losses is a promising future direction. Finally, it would be interesting to evaluate the performance of ALLY under different diversity measures, comparing their computational burden. 

\section{Acknowldegments}
Supported by NSF-Simons MoDL, Award \#2031985, NSF AI Institutes program, Award \#2112665, and NSF HDR TRipods Award \#1934960.

\newpage

\bibliographystyle{unsrtnat}
\bibliography{references} 

\begin{thebibliography}{72}
\providecommand{\natexlab}[1]{#1}
\providecommand{\url}[1]{\texttt{#1}}
\expandafter\ifx\csname urlstyle\endcsname\relax
  \providecommand{\doi}[1]{doi: #1}\else
  \providecommand{\doi}{doi: \begingroup \urlstyle{rm}\Url}\fi

\bibitem[Brown et~al.(2020)Brown, Mann, Ryder, Subbiah, Kaplan, Dhariwal,
  Neelakantan, Shyam, Sastry, Askell, Agarwal, Herbert-Voss, Krueger, Henighan,
  Child, Ramesh, Ziegler, Wu, Winter, Hesse, Chen, Sigler, Litwin, Gray, Chess,
  Clark, Berner, McCandlish, Radford, Sutskever, and Amodei]{GPT3}
Tom Brown, Benjamin Mann, Nick Ryder, Melanie Subbiah, Jared~D Kaplan, Prafulla
  Dhariwal, Arvind Neelakantan, Pranav Shyam, Girish Sastry, Amanda Askell,
  Sandhini Agarwal, Ariel Herbert-Voss, Gretchen Krueger, Tom Henighan, Rewon
  Child, Aditya Ramesh, Daniel Ziegler, Jeffrey Wu, Clemens Winter, Chris
  Hesse, Mark Chen, Eric Sigler, Mateusz Litwin, Scott Gray, Benjamin Chess,
  Jack Clark, Christopher Berner, Sam McCandlish, Alec Radford, Ilya Sutskever,
  and Dario Amodei.
\newblock Language models are few-shot learners.
\newblock In \emph{Advances in Neural Information Processing Systems},
  volume~33, pages 1877--1901, 2020.

\bibitem[Lepikhin et~al.(2021)Lepikhin, Lee, Xu, Chen, Firat, Huang, Krikun,
  Shazeer, and Chen]{lepikhin2021gshard}
Dmitry Lepikhin, HyoukJoong Lee, Yuanzhong Xu, Dehao Chen, Orhan Firat, Yanping
  Huang, Maxim Krikun, Noam Shazeer, and Zhifeng Chen.
\newblock {GS}hard: Scaling giant models with conditional computation and
  automatic sharding.
\newblock In \emph{International Conference on Learning Representations}, 2021.
\newblock URL \url{https://openreview.net/forum?id=qrwe7XHTmYb}.

\bibitem[Dosovitskiy et~al.(2021)Dosovitskiy, Beyer, Kolesnikov, Weissenborn,
  Zhai, Unterthiner, Dehghani, Minderer, Heigold, Gelly, Uszkoreit, and
  Houlsby]{dosovitskiy2021an}
Alexey Dosovitskiy, Lucas Beyer, Alexander Kolesnikov, Dirk Weissenborn,
  Xiaohua Zhai, Thomas Unterthiner, Mostafa Dehghani, Matthias Minderer, Georg
  Heigold, Sylvain Gelly, Jakob Uszkoreit, and Neil Houlsby.
\newblock An image is worth 16x16 words: Transformers for image recognition at
  scale.
\newblock In \emph{International Conference on Learning Representations}, 2021.
\newblock URL \url{https://openreview.net/forum?id=YicbFdNTTy}.

\bibitem[Liu(2004)]{al_cancerdet}
Ying Liu.
\newblock Active learning with support vector machine applied to gene
  expression data for cancer classification.
\newblock \emph{Journal of chemical information and computer sciences}, 44
  6:\penalty0 1936--41, 2004.

\bibitem[Hoi et~al.(2006)Hoi, Jin, Zhu, and Lyu]{al_medical}
Steven C.~H. Hoi, Rong Jin, Jianke Zhu, and Michael~R. Lyu.
\newblock Batch mode active learning and its application to medical image
  classification.
\newblock In \emph{Proceedings of the 23rd International Conference on Machine
  Learning}, ICML '06, page 417–424, New York, NY, USA, 2006. Association for
  Computing Machinery.
\newblock ISBN 1595933832.
\newblock \doi{10.1145/1143844.1143897}.
\newblock URL \url{https://doi.org/10.1145/1143844.1143897}.

\bibitem[Nath et~al.(2021)Nath, Yang, Landman, Xu, and Roth]{med_im_seg}
V.~Nath, Dong Yang, Bennett~A. Landman, Daguang Xu, and Holger~R. Roth.
\newblock Diminishing uncertainty within the training pool: Active learning for
  medical image segmentation.
\newblock \emph{IEEE Transactions on Medical Imaging}, 40:\penalty0 2534--2547,
  2021.

\bibitem[Settles(2009)]{al_survey}
Burr Settles.
\newblock Active learning literature survey.
\newblock 2009.

\bibitem[Sener and Savarese(2018)]{CORESET}
Ozan Sener and Silvio Savarese.
\newblock Active learning for convolutional neural networks: A core-set
  approach, 2018.

\bibitem[Citovsky et~al.(2021)Citovsky, DeSalvo, Gentile, Karydas, Rajagopalan,
  Rostamizadeh, and Kumar]{bal_at_scale}
Gui Citovsky, Giulia DeSalvo, Claudio Gentile, Lazaros Karydas, Anand
  Rajagopalan, Afshin Rostamizadeh, and Sanjiv Kumar.
\newblock Batch active learning at scale.
\newblock \emph{CoRR}, abs/2107.14263, 2021.
\newblock URL \url{https://arxiv.org/abs/2107.14263}.

\bibitem[Lewis and Gale(1994)]{poolbased}
David~D. Lewis and William~A. Gale.
\newblock A sequential algorithm for training text classifiers.
\newblock \emph{CoRR}, abs/cmp-lg/9407020, 1994.
\newblock URL \url{http://arxiv.org/abs/cmp-lg/9407020}.

\bibitem[Elhamifar et~al.(2013)Elhamifar, Sapiro, Yang, and Sasrty]{sapiroal}
Ehsan Elhamifar, Guillermo Sapiro, Allen Yang, and S.~Shankar Sasrty.
\newblock A convex optimization framework for active learning.
\newblock In \emph{2013 IEEE International Conference on Computer Vision},
  pages 209--216, 2013.
\newblock \doi{10.1109/ICCV.2013.33}.

\bibitem[Settles et~al.(2008)Settles, Craven, and Ray]{emc}
Burr Settles, Mark Craven, and Soumya Ray.
\newblock Multiple-instance active learning.
\newblock In J.~Platt, D.~Koller, Y.~Singer, and S.~Roweis, editors,
  \emph{Advances in Neural Information Processing Systems}, volume~20. Curran
  Associates, Inc., 2008.
\newblock URL
  \url{https://proceedings.neurips.cc/paper/2007/file/a1519de5b5d44b31a01de013b9b51a80-Paper.pdf}.

\bibitem[Cai et~al.(2013)Cai, Zhang, and Zhou]{maxemc}
Wenbin Cai, Ya~Zhang, and Jun Zhou.
\newblock Maximizing expected model change for active learning in regression.
\newblock In \emph{2013 IEEE 13th International Conference on Data Mining},
  pages 51--60, 2013.
\newblock \doi{10.1109/ICDM.2013.104}.

\bibitem[Roy and McCallum(2001)]{eer}
Nicholas Roy and Andrew McCallum.
\newblock Toward optimal active learning through sampling estimation of error
  reduction.
\newblock In \emph{Proceedings of the Eighteenth International Conference on
  Machine Learning}, ICML '01, page 441–448, San Francisco, CA, USA, 2001.
  Morgan Kaufmann Publishers Inc.
\newblock ISBN 1558607781.

\bibitem[Ash et~al.(2019)Ash, Zhang, Krishnamurthy, Langford, and
  Agarwal]{BADGE}
Jordan~T. Ash, Chicheng Zhang, Akshay Krishnamurthy, John Langford, and Alekh
  Agarwal.
\newblock Deep batch active learning by diverse, uncertain gradient lower
  bounds.
\newblock \emph{CoRR}, abs/1906.03671, 2019.
\newblock URL \url{http://arxiv.org/abs/1906.03671}.

\bibitem[Brinker(2003)]{diversity}
K.~Brinker.
\newblock Incorporating diversity in active learning with support vector
  machines.
\newblock In \emph{Proceedings of the 20th International Conference on Machine
  Learning (ICML 2000)}, 2003.

\bibitem[Bengio et~al.(2013)Bengio, Courville, and
  Vincent]{bengio2013representation}
Yoshua Bengio, Aaron Courville, and Pascal Vincent.
\newblock Representation learning: A review and new perspectives.
\newblock \emph{IEEE transactions on pattern analysis and machine
  intelligence}, 35\penalty0 (8):\penalty0 1798--1828, 2013.

\bibitem[Oord et~al.(2018)Oord, Li, and Vinyals]{oord2018representation}
Aaron van~den Oord, Yazhe Li, and Oriol Vinyals.
\newblock Representation learning with contrastive predictive coding.
\newblock \emph{arXiv preprint arXiv:1807.03748}, 2018.

\bibitem[Ash et~al.(2021)Ash, Goel, Krishnamurthy, and Kakade]{BAIT}
Jordan~T. Ash, Surbhi Goel, Akshay Krishnamurthy, and Sham~M. Kakade.
\newblock Gone fishing: Neural active learning with fisher embeddings.
\newblock \emph{CoRR}, abs/2106.09675, 2021.
\newblock URL \url{https://arxiv.org/abs/2106.09675}.

\bibitem[Gal et~al.(2017)Gal, Islam, and Ghahramani]{BALD}
Yarin Gal, Riashat Islam, and Zoubin Ghahramani.
\newblock Deep bayesian active learning with image data.
\newblock \emph{CoRR}, abs/1703.02910, 2017.
\newblock URL \url{http://arxiv.org/abs/1703.02910}.

\bibitem[Yoo and Kweon(2019)]{learninglossal}
Donggeun Yoo and In~So Kweon.
\newblock Learning loss for active learning.
\newblock \emph{CoRR}, abs/1905.03677, 2019.
\newblock URL \url{http://arxiv.org/abs/1905.03677}.

\bibitem[Sinha et~al.(2019)Sinha, Ebrahimi, and Darrell]{VAAL}
Samarth Sinha, Sayna Ebrahimi, and Trevor Darrell.
\newblock Variational adversarial active learning.
\newblock \emph{CoRR}, abs/1904.00370, 2019.
\newblock URL \url{http://arxiv.org/abs/1904.00370}.

\bibitem[Shui et~al.(2019)Shui, Zhou, Gagn{\'{e}}, and Wang]{WAAL}
Changjian Shui, Fan Zhou, Christian Gagn{\'{e}}, and Boyu Wang.
\newblock Deep active learning: Unified and principled method for query and
  training.
\newblock \emph{CoRR}, abs/1911.09162, 2019.
\newblock URL \url{http://arxiv.org/abs/1911.09162}.

\bibitem[Zhdanov(2019)]{div_kmeans1}
Fedor Zhdanov.
\newblock Diverse mini-batch active learning.
\newblock \emph{CoRR}, abs/1901.05954, 2019.
\newblock URL \url{http://arxiv.org/abs/1901.05954}.

\bibitem[Bodó et~al.(2011)Bodó, Minier, and Csató]{div_kmeans2}
Zalán Bodó, Zsolt Minier, and Lehel Csató.
\newblock Active learning with clustering.
\newblock In Isabelle Guyon, Gavin Cawley, Gideon Dror, Vincent Lemaire, and
  Alexander Statnikov, editors, \emph{Active Learning and Experimental Design
  workshop In conjunction with AISTATS 2010}, volume~16 of \emph{Proceedings of
  Machine Learning Research}, pages 127--139, Sardinia, Italy, 16 May 2011.
  PMLR.
\newblock URL \url{https://proceedings.mlr.press/v16/bodo11a.html}.

\bibitem[Zhan et~al.(2022)Zhan, Wang, Huang, Xiong, Dou, and Chan]{div_kmeans3}
Xueying Zhan, Qingzhong Wang, Kuan-hao Huang, Haoyi Xiong, Dejing Dou, and
  Antoni~B. Chan.
\newblock A comparative survey of deep active learning, 2022.
\newblock URL \url{https://arxiv.org/abs/2203.13450}.

\bibitem[Azimi et~al.(2012)Azimi, Fern, Zhang-Fern, Borradaile, and
  Heeringa]{div_approx}
Javad Azimi, Alan Fern, Xiaoli Zhang-Fern, Glencora Borradaile, and Brent
  Heeringa.
\newblock Batch active learning via coordinated matching, 2012.
\newblock URL \url{https://arxiv.org/abs/1206.6458}.

\bibitem[Kirsch et~al.(2021)Kirsch, Farquhar, and Gal]{stoch_db}
Andreas Kirsch, Sebastian Farquhar, and Yarin Gal.
\newblock A simple baseline for batch active learning with stochastic
  acquisition functions.
\newblock \emph{CoRR}, abs/2106.12059, 2021.
\newblock URL \url{https://arxiv.org/abs/2106.12059}.

\bibitem[Gissin and Shalev{-}Shwartz(2019)]{disc_active_learning}
Daniel Gissin and Shai Shalev{-}Shwartz.
\newblock Discriminative active learning.
\newblock \emph{CoRR}, abs/1907.06347, 2019.
\newblock URL \url{http://arxiv.org/abs/1907.06347}.

\bibitem[Chamon and Ribeiro(2021)]{LuizPAC}
Luiz F.~O. Chamon and Alejandro Ribeiro.
\newblock Probably approximately correct constrained learning, 2021.

\bibitem[Chamon et~al.(2021)Chamon, Paternain, Calvo{-}Fullana, and
  Ribeiro]{LuizNonConvex}
Luiz F.~O. Chamon, Santiago Paternain, Miguel Calvo{-}Fullana, and Alejandro
  Ribeiro.
\newblock Constrained learning with non-convex losses.
\newblock \emph{CoRR}, abs/2103.05134, 2021.
\newblock URL \url{https://arxiv.org/abs/2103.05134}.

\bibitem[Fioretto et~al.(2020)Fioretto, Hentenryck, Mak, Tran, Baldo, and
  Lombardi]{lagrangianduality2}
Ferdinando Fioretto, Pascal~Van Hentenryck, Terrence W.~K. Mak, Cuong Tran,
  Federico Baldo, and Michele Lombardi.
\newblock Lagrangian duality for constrained deep learning.
\newblock In \emph{ECML/PKDD (5)}, pages 118--135, 2020.
\newblock URL \url{https://doi.org/10.1007/978-3-030-67670-4_8}.

\bibitem[Cerviño et~al.(2022)Cerviño, Ruiz, and Ribeiro]{pochi1}
Juan Cerviño, Luana Ruiz, and Alejandro Ribeiro.
\newblock Training stable graph neural networks through constrained learning.
\newblock In \emph{ICASSP 2022 - 2022 IEEE International Conference on
  Acoustics, Speech and Signal Processing (ICASSP)}, pages 4223--4227, 2022.
\newblock \doi{10.1109/ICASSP43922.2022.9746912}.

\bibitem[Shen et~al.(2022)Shen, Cervino, Hassani, and Ribeiro]{pochi2}
Zebang Shen, Juan Cervino, Hamed Hassani, and Alejandro Ribeiro.
\newblock An agnostic approach to federated learning with class imbalance.
\newblock In \emph{International Conference on Learning Representations}, 2022.
\newblock URL \url{https://openreview.net/forum?id=Xo0lbDt975}.

\bibitem[Arghal et~al.(2021)Arghal, Lei, and Bidokhti]{raghu}
Raghu Arghal, Eric Lei, and Shirin~Saeedi Bidokhti.
\newblock Robust graph neural networks via probabilistic lipschitz constraints.
\newblock \emph{CoRR}, abs/2112.07575, 2021.
\newblock URL \url{https://arxiv.org/abs/2112.07575}.

\bibitem[Robey et~al.(2021)Robey, Chamon, Pappas, Hassani, and Ribeiro]{robey}
Alexander Robey, Luiz Chamon, George~J. Pappas, Hamed Hassani, and Alejandro
  Ribeiro.
\newblock Adversarial robustness with semi-infinite constrained learning.
\newblock In M.~Ranzato, A.~Beygelzimer, Y.~Dauphin, P.S. Liang, and J.~Wortman
  Vaughan, editors, \emph{Advances in Neural Information Processing Systems},
  volume~34, pages 6198--6215. Curran Associates, Inc., 2021.
\newblock URL
  \url{https://proceedings.neurips.cc/paper/2021/file/312ecfdfa8b239e076b114498ce21905-Paper.pdf}.

\bibitem[Hounie et~al.()Hounie, Chamon, and Ribeiro]{chunis}
Ignacio Hounie, Luiz F.~O. Chamon, and Alejandro Ribeiro.
\newblock Automatic data augmentation via invariance-constrained learning.
\newblock URL \url{https://arxiv.org/abs/2209.15031}.

\bibitem[Vapnik(1999)]{VapnikSRM}
V.N. Vapnik.
\newblock An overview of statistical learning theory.
\newblock \emph{IEEE Transactions on Neural Networks}, 10\penalty0
  (5):\penalty0 988--999, 1999.
\newblock \doi{10.1109/72.788640}.

\bibitem[Cortes and Vapnik(1995)]{svmvapnik}
Corinna Cortes and Vladimir Vapnik.
\newblock Support-vector networks.
\newblock \emph{Machine learning}, 20\penalty0 (3):\penalty0 273--297, 1995.

\bibitem[Farquhar et~al.(2021)Farquhar, Gal, and Rainforth]{biasinAL1}
Sebastian Farquhar, Yarin Gal, and Tom Rainforth.
\newblock On statistical bias in active learning: How and when to fix it, 2021.
\newblock URL \url{https://arxiv.org/abs/2101.11665}.

\bibitem[Dasgupta(2011)]{biasinAL2}
Sanjoy Dasgupta.
\newblock Two faces of active learning.
\newblock \emph{Theoretical Computer Science}, 412\penalty0 (19):\penalty0
  1767--1781, 2011.
\newblock ISSN 0304-3975.
\newblock \doi{https://doi.org/10.1016/j.tcs.2010.12.054}.
\newblock URL
  \url{https://www.sciencedirect.com/science/article/pii/S0304397510007620}.
\newblock Algorithmic Learning Theory (ALT 2009).

\bibitem[Murray et~al.(2021)Murray, Allingham, Antor{\'{a}}n, and
  Hern{\'{a}}ndez{-}Lobato]{biasinAL3}
Chelsea Murray, James~Urquhart Allingham, Javier Antor{\'{a}}n, and
  Jos{\'{e}}~Miguel Hern{\'{a}}ndez{-}Lobato.
\newblock Addressing bias in active learning with depth uncertainty networks...
  or not.
\newblock \emph{CoRR}, abs/2112.06926, 2021.
\newblock URL \url{https://arxiv.org/abs/2112.06926}.

\bibitem[Shalev-Shwartz and Ben-David(2014)]{understandingML}
Shai Shalev-Shwartz and Shai Ben-David.
\newblock \emph{Understanding Machine Learning: From Theory to Algorithms}.
\newblock Cambridge University Press, USA, 2014.
\newblock ISBN 1107057132.

\bibitem[K.~J.~Arrow and Uzawa(1960)]{arrowhurwitz}
L.~Hurwicz K.~J.~Arrow and H.~Uzawa.
\newblock Studies in linear and non-linear programming, by k. j. arrow, l.
  hurwicz and h. uzawa. stanford university press, 1958. 229 pages.
\newblock \emph{Canadian Mathematical Bulletin}, 3\penalty0 (3):\penalty0
  196–198, 1960.
\newblock \doi{10.1017/S0008439500025522}.

\bibitem[Zhang et~al.(2016)Zhang, Bengio, Hardt, Recht, and
  Vinyals]{rethink_gen}
Chiyuan Zhang, Samy Bengio, Moritz Hardt, Benjamin Recht, and Oriol Vinyals.
\newblock Understanding deep learning requires rethinking generalization.
\newblock \emph{CoRR}, abs/1611.03530, 2016.
\newblock URL \url{http://arxiv.org/abs/1611.03530}.

\bibitem[Arpit et~al.(2017)Arpit, Jastrz{\k{e}}bski, Ballas, Krueger, Bengio,
  Kanwal, Maharaj, Fischer, Courville, Bengio, et~al.]{arpit2017closer}
Devansh Arpit, Stanis{\l}aw Jastrz{\k{e}}bski, Nicolas Ballas, David Krueger,
  Emmanuel Bengio, Maxinder~S Kanwal, Tegan Maharaj, Asja Fischer, Aaron
  Courville, Yoshua Bengio, et~al.
\newblock A closer look at memorization in deep networks.
\newblock In \emph{International Conference on Machine Learning}, pages
  233--242. PMLR, 2017.

\bibitem[Toneva et~al.(2019)Toneva, Sordoni, des Combes, Trischler, Bengio, and
  Gordon]{forgetting}
Mariya Toneva, Alessandro Sordoni, Remi~Tachet des Combes, Adam Trischler,
  Yoshua Bengio, and Geoffrey~J. Gordon.
\newblock An empirical study of example forgetting during deep neural network
  learning.
\newblock In \emph{International Conference on Learning Representations}, 2019.
\newblock URL \url{https://openreview.net/forum?id=BJlxm30cKm}.

\bibitem[Katharopoulos and Fleuret(2018)]{notallsamples}
Angelos Katharopoulos and Fran{\c{c}}ois Fleuret.
\newblock Not all samples are created equal: Deep learning with importance
  sampling.
\newblock \emph{CoRR}, abs/1803.00942, 2018.
\newblock URL \url{http://arxiv.org/abs/1803.00942}.

\bibitem[Tian et~al.(2020)Tian, Krishnan, and Isola]{tian2020contrastive}
Yonglong Tian, Dilip Krishnan, and Phillip Isola.
\newblock Contrastive multiview coding.
\newblock In \emph{Computer Vision--ECCV 2020: 16th European Conference,
  Glasgow, UK, August 23--28, 2020, Proceedings, Part XI 16}, pages 776--794.
  Springer, 2020.

\bibitem[Chen et~al.(2020)Chen, Kornblith, Norouzi, and Hinton]{chen2020simple}
Ting Chen, Simon Kornblith, Mohammad Norouzi, and Geoffrey Hinton.
\newblock A simple framework for contrastive learning of visual
  representations.
\newblock In \emph{International conference on machine learning}, pages
  1597--1607. PMLR, 2020.

\bibitem[Narasimhan et~al.(2020)Narasimhan, Cotter, Zhou, Wang, and
  Guo]{vec2lambda}
Harikrishna Narasimhan, Andrew Cotter, Yichen Zhou, Serena Wang, and Wenshuo
  Guo.
\newblock Approximate heavily-constrained learning with lagrange multiplier
  models.
\newblock In \emph{Advances in Neural Information Processing Systems},
  volume~33, pages 8693--8703, 2020.

\bibitem[Goodfellow et~al.(2015)Goodfellow, Shlens, and
  Szegedy]{goodfellow_adv}
Ian~J. Goodfellow, Jonathon Shlens, and Christian Szegedy.
\newblock Explaining and harnessing adversarial examples.
\newblock In Yoshua Bengio and Yann LeCun, editors, \emph{3rd International
  Conference on Learning Representations, {ICLR} 2015, San Diego, CA, USA, May
  7-9, 2015, Conference Track Proceedings}, 2015.
\newblock URL \url{http://arxiv.org/abs/1412.6572}.

\bibitem[Karamcheti et~al.(2021)Karamcheti, Krishna, 0001, and
  Manning]{outliers}
Siddharth Karamcheti, Ranjay Krishna, Li~Fei-Fei 0001, and Christopher~D.
  Manning.
\newblock Mind your outliers! investigating the negative impact of outliers on
  active learning for visual question answering.
\newblock In Chengqing Zong, Fei Xia, Wenjie~Li 0002, and Roberto Navigli,
  editors, \emph{Proceedings of the 59th Annual Meeting of the Association for
  Computational Linguistics and the 11th International Joint Conference on
  Natural Language Processing, ACL/IJCNLP 2021, (Volume 1: Long Papers),
  Virtual Event, August 1-6, 2021}, pages 7265--7281. Association for
  Computational Linguistics, 2021.
\newblock ISBN 978-1-954085-52-7.
\newblock URL \url{https://aclanthology.org/2021.acl-long.564}.

\bibitem[Coates et~al.(2011)Coates, Ng, and Lee]{stl-10}
Adam Coates, Andrew Ng, and Honglak Lee.
\newblock An analysis of single-layer networks in unsupervised feature
  learning.
\newblock In Geoffrey Gordon, David Dunson, and Miroslav Dudík, editors,
  \emph{Proceedings of the Fourteenth International Conference on Artificial
  Intelligence and Statistics}, volume~15 of \emph{Proceedings of Machine
  Learning Research}, pages 215--223, Fort Lauderdale, FL, USA, 11--13 Apr
  2011. PMLR.
\newblock URL \url{https://proceedings.mlr.press/v15/coates11a.html}.

\bibitem[Krizhevsky(2009)]{cifar10}
Alex Krizhevsky.
\newblock Learning multiple layers of features from tiny images.
\newblock Technical report, 2009.

\bibitem[Netzer et~al.(2011)Netzer, Wang, Coates, Bissacco, Wu, and Ng]{svhn}
Yuval Netzer, Tao Wang, Adam Coates, Alessandro Bissacco, Bo~Wu, and Andrew~Y.
  Ng.
\newblock Reading digits in natural images with unsupervised feature learning.
\newblock In \emph{NIPS Workshop on Deep Learning and Unsupervised Feature
  Learning 2011}, 2011.
\newblock URL
  \url{http://ufldl.stanford.edu/housenumbers/nips2011_housenumbers.pdf}.

\bibitem[LeCun and Cortes(2010)]{mnist}
Yann LeCun and Corinna Cortes.
\newblock {MNIST} handwritten digit database.
\newblock 2010.
\newblock URL \url{http://yann.lecun.com/exdb/mnist/}.

\bibitem[Tsanas et~al.(2010)Tsanas, Little, McSharry, and
  Ramig]{parkinsondataset}
Athanasios Tsanas, Max~A. Little, Patrick~E. McSharry, and Lorraine~O. Ramig.
\newblock Accurate telemonitoring of parkinson's disease progression by
  noninvasive speech tests.
\newblock \emph{IEEE Transactions on Biomedical Engineering}, 57\penalty0
  (4):\penalty0 884--893, 2010.
\newblock \doi{10.1109/TBME.2009.2036000}.

\bibitem[He et~al.(2016)He, Zhang, Ren, and Sun]{resnet}
Kaiming He, Xiangyu Zhang, Shaoqing Ren, and Jian Sun.
\newblock Deep residual learning for image recognition.
\newblock In \emph{Proceedings of the IEEE Conference on Computer Vision and
  Pattern Recognition (CVPR)}, June 2016.

\bibitem[Lloyd(1982)]{kmeans}
Stuart Lloyd.
\newblock Least squares quantization in pcm.
\newblock \emph{IEEE transactions on information theory}, 28\penalty0
  (2):\penalty0 129--137, 1982.

\bibitem[Geifman and El{-}Yaniv(2017)]{simil_coreset}
Yonatan Geifman and Ran El{-}Yaniv.
\newblock Deep active learning over the long tail.
\newblock \emph{CoRR}, abs/1711.00941, 2017.
\newblock URL \url{http://arxiv.org/abs/1711.00941}.

\bibitem[Paszke et~al.(2019)Paszke, Gross, Massa, Lerer, Bradbury, Chanan,
  Killeen, Lin, Gimelshein, Antiga, Desmaison, Kopf, Yang, DeVito, Raison,
  Tejani, Chilamkurthy, Steiner, Fang, Bai, and Chintala]{pytorch}
Adam Paszke, Sam Gross, Francisco Massa, Adam Lerer, James Bradbury, Gregory
  Chanan, Trevor Killeen, Zeming Lin, Natalia Gimelshein, Luca Antiga, Alban
  Desmaison, Andreas Kopf, Edward Yang, Zachary DeVito, Martin Raison, Alykhan
  Tejani, Sasank Chilamkurthy, Benoit Steiner, Lu~Fang, Junjie Bai, and Soumith
  Chintala.
\newblock Pytorch: An imperative style, high-performance deep learning library.
\newblock In H.~Wallach, H.~Larochelle, A.~Beygelzimer, F.~d\textquotesingle
  Alch\'{e}-Buc, E.~Fox, and R.~Garnett, editors, \emph{Advances in Neural
  Information Processing Systems 32}, pages 8024--8035. Curran Associates,
  Inc., 2019.
\newblock URL
  \url{http://papers.neurips.cc/paper/9015-pytorch-an-imperative-style-high-performance-deep-learning-library.pdf}.

\bibitem[Yang et~al.(2017)Yang, Lee, Chung, Wu, Chen, and Lin]{libact}
Yao-Yuan Yang, Shao-Chuan Lee, Yu-An Chung, Tung-En Wu, Si-An Chen, and
  Hsuan-Tien Lin.
\newblock libact: Pool-based active learning in python.
\newblock Technical report, National Taiwan University, October 2017.
\newblock URL \url{https://github.com/ntucllab/libact}.
\newblock available as arXiv preprint \url{https://arxiv.org/abs/1710.00379}.

\bibitem[Oberman and Calder(2018)]{crossent_lipschitz}
Adam~M. Oberman and Jeff Calder.
\newblock Lipschitz regularized deep neural networks converge and generalize.
\newblock \emph{CoRR}, abs/1808.09540, 2018.
\newblock URL \url{http://arxiv.org/abs/1808.09540}.

\bibitem[Bertsekas et~al.(2003)Bertsekas, Nedic, and Ozdaglar]{bertsekasconvex}
Dimitri Bertsekas, Angelia Nedic, and Asuman Ozdaglar.
\newblock \emph{Convex analysis and optimization}, volume~1.
\newblock Athena Scientific, 2003.

\bibitem[Shapiro(2009)]{shapiroduality}
Alexander Shapiro.
\newblock Semi-infinite programming, duality, discretization and optimality
  conditions.
\newblock \emph{Optimization}, 58\penalty0 (2):\penalty0 133--161, 2009.
\newblock \doi{10.1080/02331930902730070}.
\newblock URL \url{https://doi.org/10.1080/02331930902730070}.

\bibitem[Bonnans and Shapiro(2013)]{shapirobook}
J.Frédéric Bonnans and Alexander Shapiro.
\newblock \emph{Perturbation analysis of optimization problems}.
\newblock Springer Science \& Business Media, 2013.
\newblock URL
  \url{https://web.archive.org/web/20170809131322id_/http://www2.isye.gatech.edu/~ashapiro/publications/book-Jan00_typeset.pdf}.

\bibitem[Shapiro(1995)]{ShapiroDifferentiability}
Alexander Shapiro.
\newblock Directional differentiability of the optimal value function in convex
  semi-infinite programming.
\newblock \emph{Math. Program.}, 70\penalty0 (1–3):\penalty0 149–157, oct
  1995.
\newblock ISSN 0025-5610.
\newblock \doi{10.1007/BF01585933}.
\newblock URL \url{https://doi.org/10.1007/BF01585933}.

\bibitem[P.~Zencke(1987)]{zencke}
R.~Hettich P.~Zencke.
\newblock Directional derivatives for the value-function in semi-infinite
  programming.
\newblock 38, 1987.

\bibitem[Hornik et~al.(1989)Hornik, Stinchcombe, and White]{univapprox}
Kurt Hornik, Maxwell Stinchcombe, and Halbert White.
\newblock Multilayer feedforward networks are universal approximators.
\newblock \emph{Neural Networks}, 2\penalty0 (5):\penalty0 359--366, 1989.
\newblock ISSN 0893-6080.
\newblock \doi{https://doi.org/10.1016/0893-6080(89)90020-8}.
\newblock URL
  \url{https://www.sciencedirect.com/science/article/pii/0893608089900208}.

\bibitem[Kingma and Ba(2015)]{adam}
Diederik~P. Kingma and Jimmy Ba.
\newblock Adam: {A} method for stochastic optimization.
\newblock In Yoshua Bengio and Yann LeCun, editors, \emph{3rd International
  Conference on Learning Representations, {ICLR} 2015, San Diego, CA, USA, May
  7-9, 2015, Conference Track Proceedings}, 2015.
\newblock URL \url{http://arxiv.org/abs/1412.6980}.

\bibitem[Le and Yang(2015)]{tiny_im}
Ya~Le and Xuan~S. Yang.
\newblock Tiny imagenet visual recognition challenge.
\newblock 2015.

\end{thebibliography}


\newpage
\newpage

\appendix

\section{Strong Duality of the CSL Problem}
\label{appx:duality}
\textbf{Assumption 1.}\label{ass:1} The losses $\ell(\cdot, y)$ and $\ell'(\cdot, y)$, are convex functions for all $y \in \mathcal{Y}$. 

In Assumption 1, the convexity of the losses is taken with respect to the model's output, and not the model parameters. The cross-entropy loss, commonly used in classification with a softmax layer satisfies strong convexity when considering a probability simplex \cite{crossent_lipschitz}. Typical losses for regression (e.g., mean-squared error, L1 loss) also satisfy this assumption. 

\textbf{Assumption 2.} The hypothesis class $\mathcal{F}$ is convex. 

To obtain a convex hypothesis class, as required by Assumption 2, it suffices to take the convex hull of the function class originally considered. 


\textbf{Assumption 3.} There exists $f \in \mathcal{F}$ strictly feasible for \eqref{eq:cl_formulation} (i.e., $\ell' \left(f(\boldsymbol{x}), y\right) < \epsilon(\boldsymbol{x}), \quad \mathfrak{D} \text {--a.e.}$)

Assumption 3 guarantees that the problem \eqref{eq:cl_formulation} is feasible and that its dual is well-posed.


\begin{proposition}
\label{prop_strong_duality}
Under Assumptions 1-3, \eqref{eq:cl_formulation} and \eqref{eq:D-CSL} are strongly dual, i.e., $P^* = D^*$.
\end{proposition}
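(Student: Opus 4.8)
The plan is to invoke a known infinite-dimensional strong-duality theorem for constrained statistical learning — essentially a functional version of Slater's condition — and verify that Assumptions~1--3 supply exactly its hypotheses. The structure of the argument parallels the finite-dimensional proof of strong duality for convex programs: establish that the perturbation function associated with \eqref{eq:cl_formulation} is convex, then use the Slater point from Assumption~3 to guarantee that this perturbation function is subdifferentiable at the origin, and finally identify a subgradient with an optimal dual variable, which yields $D^\star = P^\star$ together with attainment of the dual.

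First I would define the perturbation function $P(\boldsymbol{\xi}) \coloneqq \min_{f\in\mathcal{H}} \{ \mathbb{E}_{\mathfrak{D}}[\ell(f(\boldsymbol{x}),y)] : \ell'(f(\boldsymbol{x}),y) \le \epsilon(\boldsymbol{x}) + \xi(\boldsymbol{x}),\ \mathfrak{D}_{\boldsymbol{x}}\text{--a.e.}\}$, where the perturbation $\xi$ ranges over an appropriate space of measurable functions (e.g.\ $L^\infty(\mathcal{X},\mathfrak{D}_{\boldsymbol{x}})$), so that $P(\mathbf{0}) = P^\star$. The key structural claim is that $P$ is convex: given two perturbations $\boldsymbol{\xi}_1,\boldsymbol{\xi}_2$ with near-optimal feasible predictors $f_1, f_2$, the convexity of $\mathcal{H}$ (Assumption~2) lets us form $f_t = t f_1 + (1-t) f_2 \in \mathcal{H}$, and the convexity of $\ell'(\cdot,y)$ in the model output (Assumption~1) shows $f_t$ is feasible for the perturbation $t\boldsymbol{\xi}_1 + (1-t)\boldsymbol{\xi}_2$, while convexity of $\ell(\cdot,y)$ bounds its objective value by $t\,\mathbb{E}[\ell(f_1)] + (1-t)\,\mathbb{E}[\ell(f_2)]$; taking infima gives $P(t\boldsymbol{\xi}_1 + (1-t)\boldsymbol{\xi}_2) \le t P(\boldsymbol{\xi}_1) + (1-t)P(\boldsymbol{\xi}_2)$. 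Here I would lean on the Dirac-delta assumption on $p(y\mid\boldsymbol{x})$ already invoked in the paper, so that $y$ is a deterministic function of $\boldsymbol{x}$ and these manipulations are pointwise-clean.

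Next I would use Assumption~3: the strictly feasible $\tilde f$ gives $\ell'(\tilde f(\boldsymbol{x}),y) \le \epsilon(\boldsymbol{x}) - \delta$ for some $\delta > 0$ on a set of full measure (one should first argue the strict inequality can be taken uniformly, e.g.\ by boundedness of the loss or by passing to a slightly shrunk $\delta$), so that $\tilde f$ remains feasible for every perturbation $\boldsymbol{\xi}$ with $\|\boldsymbol{\xi}\|_\infty \le \delta$. This shows $P$ is finite and bounded above on a neighborhood of the origin, hence — being convex — continuous at $\mathbf{0}$ and therefore subdifferentiable there. Any subgradient $-\lambda^\star$ of $P$ at $\mathbf{0}$ is, by the standard dual characterization, an element of $\Lambda$ (nonnegativity follows from monotonicity of $P$: relaxing a constraint cannot increase the optimal value) achieving $\min_{f\in\mathcal{H}} L(f,\lambda^\star) = P^\star$, which is precisely $D^\star \ge P^\star$; combined with weak duality $D^\star \le P^\star$ stated in the excerpt, this gives $P^\star = D^\star$.

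\textbf{The main obstacle} is the functional-analytic bookkeeping: unlike a finite-dimensional program, here the constraint "space" is infinite-dimensional, so one must fix a topological vector space for the perturbations in which the Slater point provides genuine interior (not merely algebraic-interior) feasibility — $L^\infty$ works because the uniform slack $\delta$ opens a norm ball — and invoke the correct version of the subgradient/separation theorem (e.g.\ a Hahn--Banach separation of the epigraph of $P$ from the point $(\mathbf{0}, P^\star - \varepsilon)$) to produce a \emph{bounded} linear functional, which one then represents as integration against a nonnegative $\lambda^\star$. I expect the cleanest route is to cite the strong-duality result for constrained learning from \citep{LuizPAC} (or its non-convex companion) whose hypotheses are tailored to exactly Assumptions~1--3, and merely to verify the matching of hypotheses rather than to reprove separation from scratch; the remaining subtlety worth a sentence is confirming that the uniform-slack strengthening of Assumption~3 is legitimate, since it only guarantees a strict inequality $\mathfrak{D}$-almost everywhere, not a uniform gap.
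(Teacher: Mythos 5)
Your proposal is correct and takes essentially the same route as the paper: the paper's proof simply observes that Assumptions~1--2 make \eqref{eq:cl_formulation} a convex program and that the strictly feasible point of Assumption~3 is a Slater point, then cites standard infinite-dimensional Slater-condition results for strong duality. Your write-up is a fleshed-out version of the perturbation-function argument underlying exactly those citations (including the $L^\infty$/uniform-slack caveat, which the paper leaves implicit in its references), so there is no substantive difference in approach.
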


\begin{proof}
Note that assumptions 1 and 2 imply that \eqref{eq:cl_formulation} is a \emph{convex} program. Under the strict feasibility assumption, \eqref{eq:cl_formulation} satisfies the constraint qualification known as \emph{Slater's condition}, from which strongly duality follows \cite{bertsekasconvex, shapiroduality}.  
\end{proof}


\section{Proof of Theorem~\ref{theo:obj_derivative} (Sensitivity of $P^{\star}$)}
\label{appx:theo_proof}
This result stems from a sensitivity analysis on the constraint of problem \eqref{eq:cl_formulation} and is well-known in the convex optimization literature. More general versions of this theorem are shown in \cite{shapirobook} (Section 4), \cite{ShapiroDifferentiability} or \cite{zencke}.

We start by viewing \eqref{eq:cl_formulation} as an optimization problem parameterized by the function $\epsilon(\boldsymbol{x})$:
\equationprefix{uCSL} 
\begin{subequations}
\label{eq:unperturbed}
\renewcommand{\theequation}{}
\begin{alignat*}{2}
    P^{\star}(\epsilon(\boldsymbol{x})) = &\min_{f \in \mathcal{F}}  &\quad& \mathbb{E}_{\mathfrak{D}}\left[\ell \left(f(\boldsymbol{x}), y\right)\right] \\
    &~~\text{s.t.} && \ell' \left(f(\boldsymbol{x}), y\right) \leq \epsilon(\boldsymbol{x}), \quad \mathfrak{D}_\mathbf{x} \text {--a.e. }
\end{alignat*}
\end{subequations}
\renewcommand{\theequation}{\arabic{equation}}%
\equationprefix{}

Define the Lagrangian $L(f, \lambda(\boldsymbol{x});\epsilon(\boldsymbol{x}))$ as $$L(f, \lambda(\boldsymbol{x});\epsilon(\boldsymbol{x})) =  \mathbb{E}_{(\boldsymbol{x}, y) \sim \mathfrak{D}} \: \Big[ \ell (f(\boldsymbol{x}), y)  + \lambda(\boldsymbol{x})( \ell'(f(\boldsymbol{x}), y)  - \epsilon(\boldsymbol{x}) )\Big],$$ where the dependence on $\epsilon(\boldsymbol{x})$ is explicitly shown. Then, following the definition of $P^\star(\epsilon(\boldsymbol{x}))$ and using strong duality, we have $$P^\star(\epsilon(\boldsymbol{x}))=\min_f L(f, \lambda^{\star}(\boldsymbol{x};\epsilon(\boldsymbol{x}));\epsilon(\boldsymbol{x})) \leq L(f, \lambda^{\star}(\boldsymbol{x};\epsilon(\boldsymbol{x}));\epsilon(\boldsymbol{x}))$$ with the inequality being true for any function $f$, and where the dependence of $\lambda^{\star}$ on $\epsilon(\boldsymbol{x})$ is also explicitly shown. Now, consider an arbitrary function $\epsilon'(\boldsymbol{x})$ and the respective primal function $f^{\star}(\cdot;\epsilon'(\boldsymbol{x}))$ which minimizes its corresponding Lagrangian. Plugging $f^{\star}(\cdot;\epsilon'(\boldsymbol{x}))$ into the above inequality, we have
\begin{align*}
P^\star(\epsilon(\boldsymbol{x})) &\leq L(f^{\star}(\cdot;\epsilon'(\boldsymbol{x})), \lambda^{\star}(\boldsymbol{x};\epsilon(\boldsymbol{x}));\epsilon(\boldsymbol{x})) \\
&=\mathbb{E}_{(\boldsymbol{x}, y) \sim \mathfrak{D}} \: \Big[ \ell (f^{\star}(\boldsymbol{x};\epsilon'(\boldsymbol{x})), y)  + \lambda^{\star}(\boldsymbol{x};\epsilon(\boldsymbol{x}))( \ell'(f^{\star}(\boldsymbol{x};\epsilon'(\boldsymbol{x})), y)  - \epsilon(\boldsymbol{x}) )\Big]
\end{align*} Now, since $f^{\star}(\cdot;\epsilon'(\boldsymbol{x}))$ is \emph{optimal} for constraint bounds given by $\epsilon'(\boldsymbol{x})$ and complementary slackness holds, we have $\mathbb{E}_{(\boldsymbol{x}, y) \sim \mathfrak{D}} \: \Big[ \ell (f^{\star}(\boldsymbol{x};\epsilon'(\boldsymbol{x})), y)  \Big] = P^\star(\epsilon'(\boldsymbol{x}))$. Moreover, since $f^{\star}(\cdot;\epsilon'(\boldsymbol{x}))$ is feasible for constraint bounds given by $\epsilon'(\boldsymbol{x})$, we have $\ell'(f^{\star}(\boldsymbol{x};\epsilon'(\boldsymbol{x})), y)  \leq \epsilon'(\boldsymbol{x}), ~ \mathfrak{D}_{\mathbf{x}} \text {-a.e.}$ Combining the above, we get
\begin{align*}
P^\star(\epsilon(\boldsymbol{x})) &\leq P^\star(\epsilon'(\boldsymbol{x})) + \mathbb{E}_{(\boldsymbol{x}, y) \sim \mathfrak{D}} \: \Big[ \lambda^{\star}(\boldsymbol{x};\epsilon(\boldsymbol{x}))( \epsilon'(\boldsymbol{x})  - \epsilon(\boldsymbol{x}) )\Big] \\
& = P^\star(\epsilon'(\boldsymbol{x})) + \langle \lambda^{\star}(\boldsymbol{x};\epsilon(\boldsymbol{x}))( \epsilon'(\boldsymbol{x})  - \epsilon(\boldsymbol{x}) )\rangle,
\end{align*} or eqivalently,
\begin{align*}
 P^\star(\epsilon'(\boldsymbol{x})) - P^\star(\epsilon(\boldsymbol{x})) \geq \langle -\lambda^{\star}(\boldsymbol{x};\epsilon(\boldsymbol{x}))( \epsilon'(\boldsymbol{x})  - \epsilon(\boldsymbol{x}) )\rangle,
\end{align*} which exactly matches the definition of the Fréchet subdifferential in Definition 3.1, hence completing the proof.

This shows that $ -\lambda^{\star}(\boldsymbol{x}) $ is a sub-gradient of $P^*(\epsilon(\boldsymbol{x}))$. Let $\partial P^*$ be the sub-differential of $P^*$ (i.e., the set of all sub-gradients). By definition, $P^*$ is differentiable if its sub-differential is a singleton: $\partial P^* =  \{ -\lambda^{\star}(\boldsymbol{x}) \}$. This holds in our problem if the Lagrangian minimizers are unique, which is the case for a strongly convex objective. However, it does not hold in the general case. In \cite{ShapiroDifferentiability}, right-side differentiability is shown by further assuming that $\exists \alpha > P^*$ and a compact set $S$ such that set of feasible points where the objective does not exceed $\alpha$ is contained in S.

\section{The Statistical Bias Induced by Active Sampling} 
\label{app:biasAL}

As mentioned in Section~\ref{sec:biasAL}, when performing several active learning iterations, we undertake a biased version of \eqref{eq:batch_activelearning}:
\begin{equation*}
\tag{b-BAL}
\mathcal{B}^* = \argmin _{\mathcal{B} \subseteq \mathcal{U}_t \: : \: | \mathcal{B} | \leq b} \; \min _{f \in \mathcal{F}} \mathbb{E}_{(\mathbf{x}, y) \sim \mathfrak{A}^{(t)}}\left[\ell\left(f(\mathbf{x}; \mathcal{L}_{t} \cup \mathcal{B}), y
\right)\right],
\end{equation*} where $\mathfrak{A}^{(t)}$ represents the biased/shifted distribution underlying the actively sampled set $\mathcal{L}^{(t)}$.. This makes the learnt predictor $f(\mathbf{x}; \mathcal{L}^{(t)} \cup \mathcal{B})$ sub-optimal for the natural data distribution $\mathfrak{D}$.

We focus our attention on the setting with $\ell = \ell'$. Observe that if $\mathfrak{D}_{\mathbf{x}}$ and $\mathfrak{A}_{\mathbf{x}}$ have the same support, then the feasibility formulation of the inner minimization is the same whether we consider the natural or biased data distribution. That is, 

\begin{align*}
 & P^{\star} = \min_{f \in \mathcal{F}}  \quad 0 & \iff \quad &   P^{\star} = \min_{f \in \mathcal{F}}  \quad 0  \\
& \text{s.t.} \quad \ell \left(f(\boldsymbol{x}), y\right) \leq \epsilon(\boldsymbol{x}), \quad \mathfrak{D}_{\boldsymbol{x}} \text {--a.e. }   & & \text{s.t.}  \quad \ell \left(f(\boldsymbol{x}), y\right) \leq \epsilon(\boldsymbol{x}), \quad \mathfrak{A}_{\boldsymbol{x}} \text {--a.e. }
\end{align*}

This is because, \begin{align*}
    & \ell( f(\mathbf{x}), y ) \leq \epsilon(\mathbf{x}) \quad \mathfrak{D}_{\boldsymbol{x}} \text {--a.e. } &  \iff  & \quad  \ell( f(\mathbf{x}), y ) p_{\mathfrak{D}_{\boldsymbol{x}}}(\mathbf{x}) \leq \epsilon(\mathbf{x}) p_{\mathfrak{D}_{\boldsymbol{x}}}(\mathbf{x}) \quad \forall \mathbf{x} \\
 & \ell( f(\mathbf{x}), y ) \leq \epsilon(\mathbf{x}) \quad \mathfrak{A}_{\boldsymbol{x}} \text {--a.e. }  & \iff & \quad \ell( f(\mathbf{x}), y ) p_{\mathfrak{D'}_{\boldsymbol{x}}}(\mathbf{x}) \leq \epsilon(\mathbf{x}) p_{\mathfrak{A}_{\boldsymbol{x}}}(\mathbf{x}) \quad \forall \mathbf{x}
\end{align*}  

Notice that for both distributions, the constraints are \emph{only} enforced when the respective density is non-zero. When the density is exactly zero, the constraint is satisfied trivially. Therefore, if $$\{ x \: : \: p_{\mathfrak{D}_{\boldsymbol{x}}}(\mathbf{x}) > 0 \} =  \{ x \: : \: p_{\mathfrak{A}_{\boldsymbol{x}}}(\mathbf{x}) > 0 \}$$ then the feasible set is the same for both problems. Since the objectives also coincide, the problems are identical. \\
In fact, we can replace the objective in the above problems by any functional $R(f)$ that does not depend on the data distribution (e.g: regularizers or complexity measures such as $ \| f \| $) and the equivalence still holds. When setting a regularizing functional as the objective, the resulting dual problem is:

\begin{equation*}
\label{eq:dfsl}
\tag{D-FSL}
D^{\star} =\max _{\lambda\in\Lambda} \; \min _{f \in \mathcal{F}} R(f) +  \lambda(\boldsymbol{x})( \ell(f(\boldsymbol{x}), y)  - \epsilon(\boldsymbol{x}) )
\end{equation*}

We know that the primal problem of \eqref{eq:dfsl} is agnostic to the underlying data distribution. If we further assume the existence of a striclty feasible solution, strong duality holds (see Appendix \ref{appx:duality}). Thus, the solution of \eqref{eq:dfsl} is also unimpacted by the distribution shift induced by active sampling.

Observe that the dual problem in the original ALLY formulation, with $\ell = \ell'$, is:
\begin{equation*}
\tag{D-CSL}
D^{\star}=\max _{\lambda\in\Lambda} \; \min _{f \in \mathcal{F}} \: (\lambda(\boldsymbol{x}) + \mathbf{1} ) \ell(f(\boldsymbol{x}), y)   - \lambda(\boldsymbol{x}) \epsilon(\boldsymbol{x}) 
\end{equation*}

Note that the dual problems \eqref{eq:D-CSL} and \eqref{eq:dfsl} differ in two aspects: the presence of a regularizer $R(f)$ and a unit shift in the dual variable function $\lambda(\mathbf{x})$. However, a unit shift in the dual variable function does not affect the relative impact of each sample on the expected loss. That is, the ranking of informativeness scores remains unchanged. Thus, assuming the supports of  $\mathfrak{D}_{\mathbf{x}}$ and $\mathfrak{A}_{\mathbf{x}}$ coincide, ALLY is equivalent to a statistically consistent active learning method modulus the use of a dual function regularizer. 

\section{Empirical Primal-Dual Learning Procedure in Algorithm~\ref{alg:pd}}
\label{app:convergence}

Even in the case were \eqref{eq:cl_formulation} and \eqref{eq:D-CSL} are strongly dual, in practice we undertake the empirical version of \eqref{eq:D-CSL} due to the challenges mentioned in Section~\ref{empirical_csl} (i.e: $\mathfrak{D}$ is unknown and $\mathcal{F}$ is infinite dimensional). This requires introducing a \emph{parameterization} of the hypothesis class $\mathcal{F}$ as $\mathcal{P} = \{f_{\boldsymbol{\theta}} \, | \, \boldsymbol{\theta} \in \Theta \}$ and replacing expectations by sample means. In the following section, we present a high-level overview of some results from \cite{LuizPAC} on the implications of these two changes and the solution yielded by Algorithm PDCL.

\subsection{Approximation Error}
On one hand, approximating the function class $\mathcal{F}$ with a finite dimensional parametrization 
$\mathcal{P}$ transforms the dual problem \eqref{eq:D-CSL} into:

\begin{equation*}
\label{eq:D-CSL-tilde}
\tag{$\Tilde{D}$-CSL}
\Tilde{D^{\star}}=\max _{\lambda\in\Lambda} \; \min _{\theta \in \Theta} \Tilde{L}\left(f_{\theta}, \lambda(\boldsymbol{x}) \right),
\end{equation*}
where $$ \Tilde{L}\left(f_{\theta}, \lambda(\boldsymbol{x}) \right) =  \mathbb{E}_{(\boldsymbol{x}, y) \sim \mathfrak{D}} \: \Big[ \ell (f_{\theta}(\boldsymbol{x}), y)  + \lambda(\boldsymbol{x})( \ell'(f_{\theta}(\boldsymbol{x}), y)  - \epsilon(\boldsymbol{x}) )\Big]. $$

Assuming that $\mathcal{P}$ is PAC learnable and that there is $\nu > 0$ such that for each $f \in \mathcal{F}$ there exists $f_{\theta} \in P$ that satisifes $\supremum_{\boldsymbol{x} \in \mathcal{X}} | f_{\theta}(\boldsymbol{x}) - f(\boldsymbol{x}) | \leq \nu$,  $f_{\theta^{\star}}$ is a \emph{near optimal solution} of \eqref{eq:cl_formulation} \cite{LuizPAC} (Proposition 2).

Note that the assumption mentioned above is connected to the richness of the parameterization $\mathcal{P}$. For instance, $\mathcal{F}$ can denote $C^0$ (i.e: space of continuous functions) and $\mathcal{P}$ be a neural network, which meets the universal approximation assumption \cite{univapprox}.

\subsection{Estimation Error}

On the other hand, approximating expectations by their sample means transforms the statistical dual problem \eqref{eq:D-CSL-tilde} into its empirical counterpart \eqref{eq:D-CERM}:
\begin{align}
\hat{D^{\star}}=\max_{\boldsymbol{\lambda} \geq \mathbf{0}} \; \min _{\boldsymbol{\theta} \in \Theta} \hat{L}(\boldsymbol{\theta}, \boldsymbol{\lambda}), \tag{D-CERM}
\end{align}

This modification creates a difference between the optimal value of the parametrized dual problem and the optimal value of the empirical dual problem. However, assuming that the losses in question are $[0, B]$-valued and M-Lipschitz continuous -conditions which can be satisfied in the case of Cross-Entropy and Mean-Squared Error by setting bounds- this difference is bounded by a constant. In addition, this constant depends on the number of samples, the VC dimension of the parametrization  $\mathcal{P}$, $B$ and $M$. See \cite{LuizPAC} (Proposition 3) for a more detailed analysis.

In \cite{LuizPAC} (Theorem 3), the approximation and estimation errors are combined, and the sub-optimality of Algorithm PDCL with respect to \eqref{eq:cl_formulation} is bounded. 

\section{Additional Experimental Settings}
\label{app:exp_setts}

We do not use warm-starting or data augmentation. We employ early stopping on a validation set from the unlabelled pool and run each experiment five times with different random seeds. The patience in the early stopping callback was set to 2 epochs. We report the mean and standard deviations across the different seeds.  We smooth the curves using exponential moving average for clarity and focus on discriminative regions of the learning curves. The dropout parameters for the dual prediction head were determined by cross-validation and are set to $0.3$ and $0.25$. The parameters of the neural networks are updated once at each iteration ($T_p=1$) using the ADAM optimizer \cite{adam}, and a learning rate of $\eta_p = 0.001$ for SVHN, STL and CIFAR and $\eta_p = 0.005$ on MNIST. We adapt the architecture for STL-10 images ($96\times96\times3$) and split the dataset so as to have 11000 train samples and 2000 test samples (the original version has 5000 training samples and 8000 testing samples). This architecture has the same embedding size as ResNet-18 (i.e: 512).  The dual variables are updated with stochastic gradient ascent and a learning rate of $\eta_d = 0.05$. During  primal and dual steps we use a diminishing update rule for the step size. The number of layers of the dual variable prediction head was set by cross validation. All experiments were carried out on a system with the following specifications: Ubuntu 20.04, AMD Threadripper 3960X CPU and RTX 3090 GPU. 

\section{Ablation on the Number of Clusters}
\label{app:ablation_clust}

We perform an ablation study on the number of clusters used by the $k$-MEANS algorithm. We use the MNIST dataset and the MLP architecture described in section \ref{sec:experiments}. As shown in Figure \ref{fig:ablclust}, the performance of ALLY improves as the number of clusters grows. This suggests that, in this setting, prioritizing diversity over individual sample informativeness is beneficial in terms of average test accuracy.

\begin{figure}[H]
    \centering
    \includegraphics[width=.4\columnwidth]{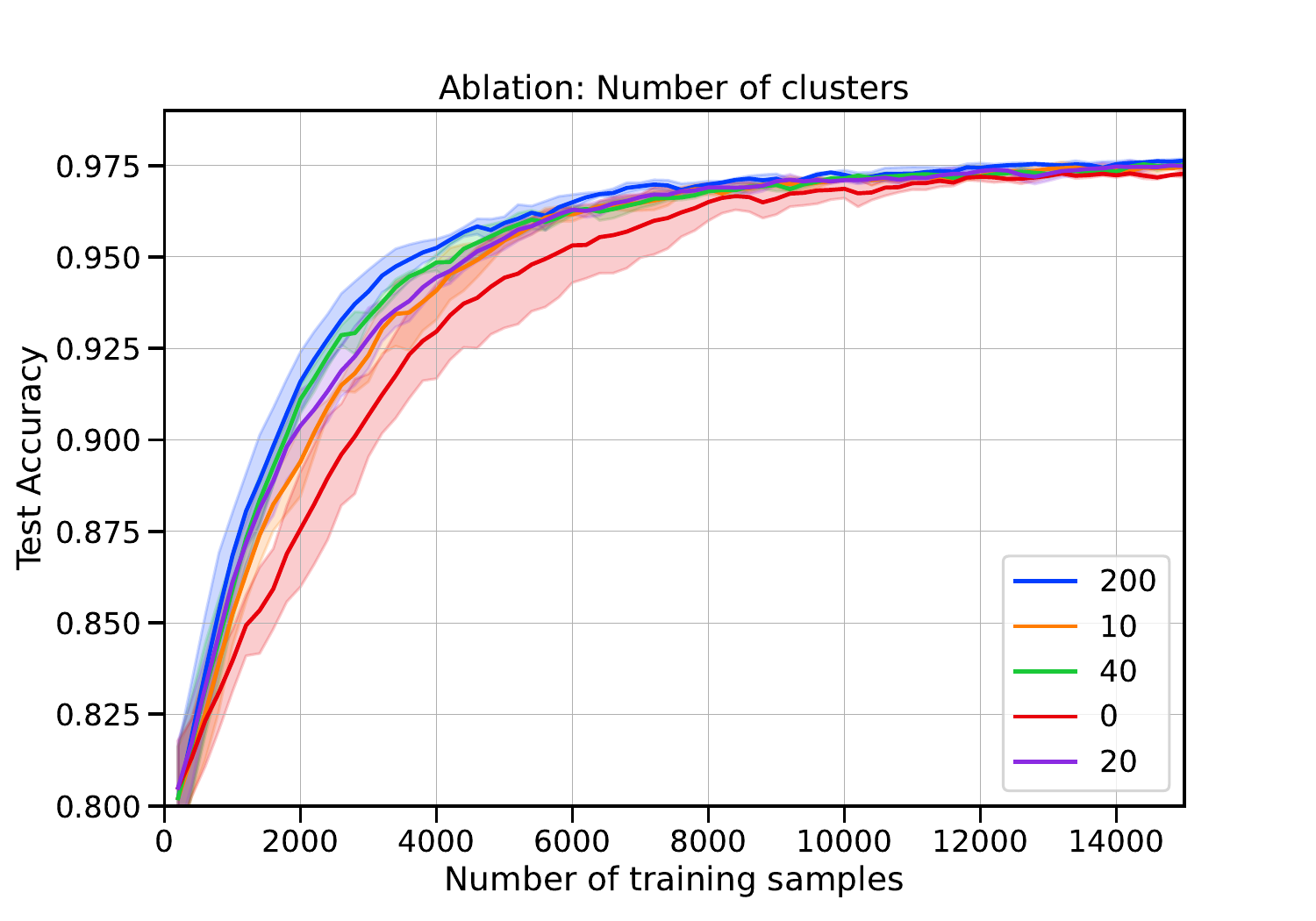}
    \caption{Ablation study on the number of clusters $k$ used in ALLY.}  
    \label{fig:ablclust}
\end{figure}

\section{Experiment on Tiny ImageNet}

Figure~\ref{fig:tiny_imagenet} compares the performance of ALLY with baselines methods on the Tiny ImageNet dataset \cite{tiny_im}, which is a subset of the ImageNet dataset (ILSVRC2012) consisting of $64 \times 64 \times 3$ images categorized in 200 classes.

\begin{figure}[H]
    \centering
    \includegraphics[width=.49\columnwidth]{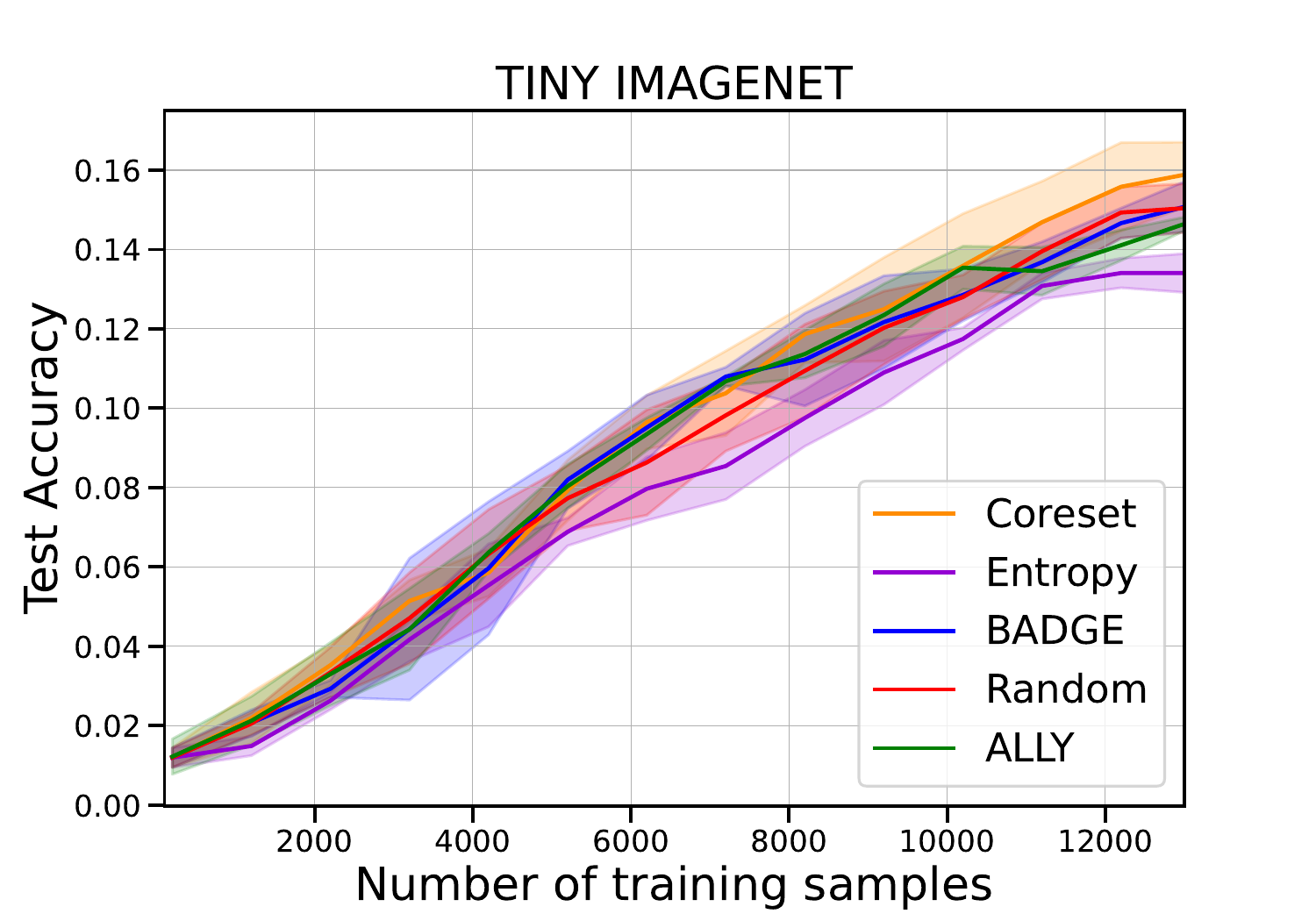}
    \caption{Empirical evaluation on the Tiny ImageNet dataset with $b=1000$.}
    \label{fig:tiny_imagenet}
\end{figure}

\end{document}